\newtheorem{assumption}{Assumption}
\newcommand{\sys}[1]{\texttt{LCD}}
\title{Learning to Generate Fair Clusters from Demonstrations}
 \author{Sainyam Galhotra,
Sandhya Saisubramanian and
Shlomo Zilberstein\\}
\begin{document}
	
	\maketitle

\begin{abstract}
	Fair clustering is the process of grouping similar entities together, while satisfying a mathematically well-defined fairness metric as a constraint. Due to the practical challenges in precise model specification, the prescribed fairness constraints are often incomplete and act as proxies to the intended fairness requirement, leading to biased outcomes when the system is deployed. We examine how to identify the intended fairness constraint for a problem based on limited demonstrations from an expert. Each demonstration is a clustering over a subset of the data. 
	We present an algorithm to identify the fairness metric from demonstrations and generate clusters using existing off-the-shelf clustering techniques, and analyze its theoretical properties. To extend our approach to novel fairness metrics for which clustering algorithms do not currently exist, we present a greedy method for clustering. Additionally, we investigate how to generate interpretable solutions using our approach. Empirical evaluation on three real-world datasets demonstrates the effectiveness of our approach in quickly identifying the underlying fairness and interpretability constraints, which are then used to generate fair and interpretable clusters.  
	
\end{abstract}

\section{Introduction}
Graph clustering is increasingly used for decision making in high-impact applications such as infrastructure development~\cite{hospers2009next}, health care~\cite{haraty2015enhanced}, and criminal justice~\cite{aljrees2016criminal}. These domains involve highly consequential decisions and it is important to ensure that the generated solutions are unbiased. Fair clustering is the process by which similar nodes are grouped together, while satisfying a given fairness constraint~\cite{fairlet}. Prior works on fair clustering focus on designing efficient algorithms to satisfy a given fairness metric~\cite{anderson2020distributional,ahmadian2019clustering,fairlet,fairnesssg,kleindessner2019fair}. These approaches \emph{assume} that the specified fairness metric is complete and accurate. With the increased growth in the number of ways to define and measure fairness, a key challenge for system designers is to accurately specify the fairness metric for a problem.

\begin{figure}
	\centering
	\includegraphics[width=3.5in]{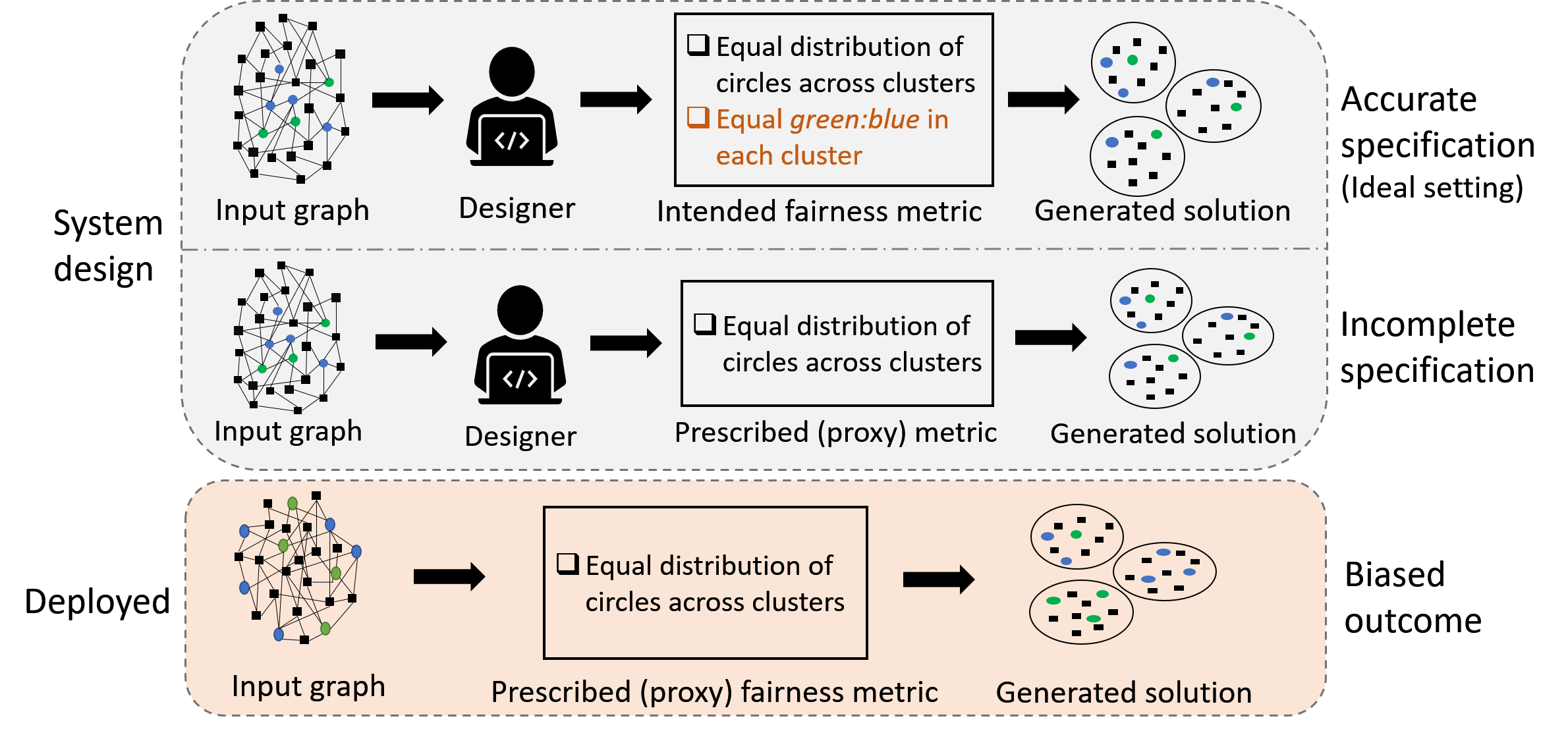}
	\caption{An illustration of incomplete specification of fairness metric leading to biased output---unequal distribution of green and blue nodes in each cluster---when deployed.}
	\label{fig:example}
\end{figure}

Due to the practical challenges in the precise specification of fairness metrics and the complexity of machine learning models, the system's objective function and constraints are often tweaked during the training phase until it produces the desired behavior on a small subset of the data. As a result, the system may be deployed with an incompletely specified fairness metric that acts as a \emph{proxy} to the intended metric. Clustering with incompletely specified fairness metrics may lead to undesirable consequences when deployed. It is challenging to identify the proxies during system design due to the nuances in the fairness definitions and unstated assumptions. Two similar fairness metrics that produce similar solutions during the design and initial testing may generate different solutions that are unfair in different ways to different groups, when deployed. 

For example in Figure~\ref{fig:example}, the designer inadvertently specifies an incomplete fairness metric and assumes the system will behave as intended when deployed. This unintentional incomplete specification is not discovered during the initial testing since the generated results align with that of the intended metric on the training data, such as sample data from California. Consequently, the system may generate biased solution when deployed in Texas, due to demographic shift. Thus, design decisions that seem innocuous during initial testing may have harmful impacts when the system is widely deployed. While the difficulty in selecting a fairness metric for a given problem is acknowledged~\cite{biaswired}, there exists no principled approach to address this meta-problem. \emph{How to correctly identify the fairness metric that the designer intends to optimize for a problem?}


We present an approach that generates fair clusters by learning to identify the intended fairness metric using limited demonstrations from an oracle. 
It is assumed that there exists a true clustering with the intended fairness metrics, which are initially unknown. Each demonstration is a sample from the true clusters, providing information about a subset of the nodes in the dataset. Given a finite number of expert demonstrations, our solution approach first clusters the demonstrations to infer the likelihood of each candidate constraint and then generates clusters using the most likely constraint.  By maintaining a distribution over the candidate metrics and updating it based on the demonstrations, the intended clusters can be recovered since demonstrations are i.i.d. 
The nodes in a demonstration are selected by the expert, abstracted as an oracle. This is in contrast to querying an oracle where the algorithm selects the nodes to query and the oracle responds if they belong to the same cluster or not. When the oracle is a human, demonstrations are easier to collect rather than querying for pairs of nodes, which requires constant oversight.

While inferring the intended fairness metric is critical to minimize the undesirable behavior of the system, the ability of an end user to evaluate a deployed system for fairness and identify when to trust the system hinges on the interpretability of the results. Though clustering results are expected to be inherently interpretable, no clear patterns may be easy to identify when clustering with a large number of features~\cite{saisubramanian2020balancing}. While the existing literature has studied fair clustering and interpretable clustering independently~\cite{fairlet,saisubramanian2020balancing}, to the best of our knowledge, there exists no approach to generate clusters that are both fair and interpretable. We show that our solution approach can generate \emph{fair and interpretable} clusters by inferring both fairness and interpretability constraints, based on limited demonstrations.

Our primary contributions are as follows: (1) formalizing the problem of learning to generate fair clusters from demonstrations; (2) presenting two algorithms to identify the fairness constraints for clustering, generate fair clusters, and analyzing their theoretical guarantees; and (3) empirically demonstrating the effectiveness of our approach in identifying the clustering constraints on three data sets, and using our approach to generate fair and interpretable clusters.

\section{Background and Related Work} \label{sec:prelim}

\vspace{6pt}
\noindent{\textbf{K-center Clustering}}~~ It is one of the most widely studied objectives in the literature~\cite{kcenter2approx}. Let $H\!=\!G(V,d)$ be a graph with $V\!=\!\{v_1,v_2,\ldots,v_n\}$ denoting a set of $n$ nodes, along with a pairwise distance metric $d\!:\!V\times V\!\rightarrow\!\mathbb{R}$. The nodes are described by features values, $F$.
Given a graph instance $H$ and an integer $k$, the goal is to identify $k$ nodes as cluster centers (say $S$, $|S|=k$) and assign each node to the cluster center such that the maximum distance of any node from its cluster center is minimized. The output is a set of clusters $\mathcal{C}\!=\!\{C_1,C_2,\ldots, C_k\}$. The clustering assignment function is defined by $\gamma: V \rightarrow [k]$ and the nodes assigned to a cluster $C_i$ are $\{v \in V \vert \gamma(v) = i \}$.
The objective value is calculated as:
$$o_{kC}(H,\mathcal{C}) = \max_{v\in V} \min_{s\in S} d(u,s). $$
A simple greedy algorithm provides a 2-approximation for the k-center problem and it is NP-hard to find a better approximation factor~\cite{kcenter2approx}. 

\vspace{6pt}
\noindent \textbf{Fairness in Machine Learning}~~ The existing literature on fairness in machine learning can be broadly categorized into two lines of work: defining notions of fairness and designing fair algorithms. 
Various notions of fairness have been studied by researchers in different fields such as AI, Economics, Law, Philosophy, and Public Policy~\cite{bera2019fair,brams1996fair,binns2018fairness,fairlet,fairnesssg,mehrabi2019survey,thomson1983problems,verma2018fairness}. 
The two commonly studied fairness criteria are as follows.

\begin{itemize}
	\item \emph{Group fairness} ensures the outcome distribution is the same for all groups of interest~\cite{fairlet,fairnesssg,verma2018fairness}. This is measured using metrics such as disparate impact~\cite{feldman2015certifying} and statistical parity~\cite{kamishima2012fairness,verma2018fairness} including conditional statistical parity, predictive parity, false positive error rates, and false negative error rates.
	\item \emph{Individual fairness} ensures that any two individuals with the same attributes are not discriminated~\cite{anderson2020distributional,dwork2012fairness,ilvento2019metric}. This is often measured using metrics such as causal discrimination~\cite{dwork2012fairness,verma2018fairness}. 
\end{itemize}
Given a mathematically well-defined fairness criteria, a fair algorithm produces outputs that are aligned with the given fairness definition. Examples include fair clustering~\cite{anderson2020distributional,ahmadian2019clustering,fairlet,kleindessner2019fair}, fair ranking~\cite{celis2018ranking}, and fair voting~\cite{celis2018multiwinner}. Although these works have laid vital ground work to assure fairness in some settings, much of the efforts in designing fair algorithms have focused on the algorithm's performance---efficiency, scalability, and providing theoretical guarantees. There is very little effort, if any, at the \emph{meta-level}: designing algorithms that can identify a suitable fairness metric for a clustering problem, given a set of candidate metrics. 
There has been recent focus on learning a metric~\cite{ilvento2019metric} or a representation that ensures fairness with respect to classification tasks~\cite{hilgard2019learning,gillen2018online}. It is not straightforward to extend these fair classification techniques to fair clustering because they have different objectives. This is further complicated by the lack of ground truth and NP-hardness of clustering. Therefore, it is critical to develop techniques to infer metrics for fair clustering.

\vspace{6pt}
\noindent{\textbf{Fair Clustering}}~~ Fair clustering approaches generate clusters that maximize the clustering objective value, while satisfying the given fairness requirement~\cite{anderson2020distributional,ahmadian2019clustering,bera2019fair,fairlet,kleindessner2019fair}. The commonly considered fairness metrics in clustering are group fairness~\cite{fairlet}, individual fairness~\cite{ilvento2019metric,mahabadifairness20}, and distributional fairness~\cite{anderson2020distributional}. 
These approaches require exact specification of fairness
metrics a priori. They generate fair clusters either by modifying the input graph or use the fairness metrics as constraints and solve it as a linear optimization.


\vspace{6pt}
\noindent{\textbf{Interpretable Clustering}} Interpretable clustering is the process of generating clusters such that it is easy to identify patterns in the data for the end user. A recent approach to generate interpretable clusters maximizes the homogeneity of the nodes in each cluster, with respect to predefined features of interest to the user~\cite{saisubramanian2020balancing}. 
The problem is solved as a multi-objective clustering problem where both interpretability and the k-center objective value are optimized. While both fairness and interpretability are typically investigated independently, the ability to evaluate the system for fairness violations often relies on its interpretability.

\vspace{6pt}
\noindent \textbf{Clustering with an Oracle}~~ Prior works that use additional knowledge from an oracle for clustering typically involve queries of the form `do nodes $u$ and $v$ belong to the same cluster?'~\cite{ashtiani2016clustering,mazumdar2017clustering,mazumdar2017query,galhotra2018robust,firmani2016online,vesdapunt2014crowdsourcing}. Our approach is different from the oracle-based clustering in the following manner. First, our approach involves the oracle selecting the nodes and determining what information is revealed. Second, the oracle provides information potentially about a subset of nodes, instead of pairwise relationships.

\vspace{6pt}
\noindent\textbf{Learning from Demonstration} Learning from demonstration is a type of apprenticeship learning, where the learner learns by observing an expert (typically a human) performing the task~\cite{abbeel2004apprenticeship}. The learner tries to mimic the expert's behavior by observing the demonstrations and generalizing it to unseen situations. Learning from demonstration is a popular approach used to teach robots to complete a task~\cite{abbeel2004apprenticeship} or avoid the negative side effects of their actions~\cite{SKZijcai2020}. 

\vspace{6pt}
\noindent\textbf{Likelihood Estimation}
Maximum likelihood estimation (MLE) 
is a statistical method to estimate the parameters of a probability distribution by maximizing the likelihood function, such that the observed data are most probable under the assumed model~\cite{white1982maximum}. Intuitively, it is a search in the parameter space to identify a set of parameters, for the model, that best fit the observed data. The maximum likelihood estimate is the point in the parameter space that maximizes the likelihood function.

\section{Problem Formulation}\label{sec:problem}
\noindent{\textbf{Problem Statement:~}} Let $G = \langle V, d\rangle$ be the input graph with vertices $V$ and distance metric $d$ and let $o$ denote the clustering objective. Given a finite set of candidate fairness metrics, denoted by $\Omega$, and a finite set of clustering demonstrations, denoted by $\Lambda$, the goal is to identify a fairness metric $\omega_F \in \Omega$ required to be satisfied by the clusters when optimizing objective $o$.

We present \emph{learning to cluster from demonstrations} (LCD), an approach to infer $\omega_F$ using $\Lambda$. LCD is introduced and discussed in the context of fair clustering but it is a generic approach that can be used to infer any clustering constraint. LCD can also handle the case of clustering with multiple fairness metrics by simply considering $\Omega$ to be the power set over possible candidate metrics. 

\vspace{4pt}
\noindent{\textbf{Clustering demonstrations:~}} LCD relies on the availability of clustering demonstrations by an expert. It is relatively easier to gather demonstrations from a human expert than querying for pairs of nodes, which requires constant oversight or availability to answer the queries.

\begin{definition}
	A \textbf{clustering demonstration} $\lambda$ provides the inter-cluster and intra-cluster links for a subset of nodes from the dataset $S\!\subseteq\!V, \vert S \vert \geq\!2,$ by grouping them according to the underlying objective function and constraints, $\lambda\!=\!\{ C_1, \ldots, C_t \}$ with each $C_i$ denoting a cluster and $t \leq k$.	
\end{definition}
To generate a demonstration, the oracle selects a subset of nodes and then clusters it, in accordance with the true clusters. The following assumption ensures that demonstrations are i.i.d and the expert is not acting as an adversary. 
\begin{assumption}
	The nodes in each demonstration are randomly selected and clustered according to the ground-truth fairness constraints.
\end{assumption}

Therefore, a demonstration $\lambda$ is a sample of the underlying clustering, revealing the relationship between a subset of the nodes. However the relationship between the nodes in successive demonstrations is \emph{unknown}, when the nodes are distinct in each demonstration. We illustrate this with an example. Consider seven nodes $\{u_1,\ldots, u_7\}$ whose true but initially unknown clustering is $C^*_1 = \{ u_1, u_2, u_3\}$, $C^*_2 = \{ u_4, u_5\}$, and $C^*_3 = \{ u_6, u_7\}$ . Let $\lambda_1 = \{ (u_1, u_2), (u_4) \}$ and $\lambda_2 = \{ (u_3), (u_5),(u_6) \}$ denote two successive demonstrations. Demonstration $\lambda_1$ shows that $u_1, u_2$ are in the same cluster and $u_4$ is in a separate cluster. Demonstration $\lambda_2$ shows that $u_3$, $u_5$ and $u_6$ are in different clusters. At the end of $\lambda_1$ and $\lambda_2$, it is not clear if $u_1, u_2$ and $u_3$ belong to the same cluster or not. 

\begin{definition}
	\textbf{Globally informative demonstration} provides the true cluster affiliation of a subset of nodes,  $S \subseteq V$, and is denoted by $\lambda_g = \{ \langle u_1,\gamma(u_1)\rangle, \ldots, \langle u_s,\gamma(u_s)\rangle \}$, $ \forall u_i \in S $ with $\gamma(u)$ indicating the cluster affiliation of node $u$. \label{defn:global}
\end{definition}
Globally informative demonstration provides information about the true cluster affiliation (cluster ID) of the nodes, which is used to retrieve the inter-cluster and intra-cluster links between the nodes and form clusters  $\{ C_1, \ldots, C_t \}$ with $t \leq k$. The information provided by a \emph{single} globally informative demonstration is the same as a regular clustering demonstration. However, globally informative demonstrations facilitate cross-referencing the cluster affiliations across demonstrations, overcoming the drawback of general
clustering demonstration. Consider the example with global demonstrations $\lambda_1 = \{ \langle u_1,1\rangle, \langle u_2,1\rangle, \langle u_4,2\rangle \}$ and $\lambda_2 = \{ \langle u_3,1\rangle, \langle u_5,2\rangle, \langle u_6,3\rangle \}$. Then we know that $C^*_1 = \{ u_1, u_2, u_3\}$. This subtle but important distinction accelerates the identification of fairness constraints. 



\begin{table*}
	\begin{center}
		\resizebox{\textwidth}{!}{%
			\begin{tabular}{|c|c|c|c| } 
				\hline
				Symbol & Formula & Parameter& Reference \\ 
				\hline\hline
				$\omega_{GF}$ & Ratio of  each feature value  $\in [\alpha,\beta]$ &$ \alpha,\beta$&\cite{bera2019fair,fairlet}\\ \hline
				$\omega_{EQ}$ & Relative distribution of a specific feature value &$ \beta$ &\cite{ding2020faster,galhotra2019lexicographically} \\ \hline
				$\omega_{IC}$ & Homogeneity of clusters &$ \beta$ &\cite{saisubramanian2020balancing}\\ 
				\hline
			\end{tabular}
		}
	\end{center}
	\caption{Candidate fairness and interpretable constraints ($\Omega$). \label{tab:constraint}}
	\vspace{-4pt}
\end{table*}

\subsection{Fairness and Interpretability Constraints\label{sec:constraints}}
In the rest of the paper, we focus on inferring the following constraints, with  constraint thresholds defined below.

\vspace{4pt}
\noindent \textbf{Disparate impact or group fairness ($\bm{\omega_{GF}}$).}~~ This commonly studied fairness metric requires the fraction of nodes belonging to all groups, characterized by the sensitive feature, to have a fair representation in each cluster. Suppose the sensitive feature takes two values---Red or Blue, with each node assigned one of the two colors. This constraint requires the fraction of red and blue nodes in a cluster to be within $[\alpha,\beta]$ where $\alpha,\beta\!\in\![0,1]$ are called \emph{constraint thresholds}~\cite{bera2019fair,fairlet}. 

\vspace{4pt}
\noindent \textbf{Equal representation ($\bm{\omega_{EQ}}$).}~~ This fairness constraint enforces equal distribution of nodes with a specific feature value, across clusters. An example is requiring all clusters to have equal number of nodes with the feature value `Red'. This clustering constraint has been particularly  useful in team formation settings, where the resources are fixed and certain colored nodes need to be distributed equally among teams (clusters). More formally, let $\alpha_i$ denote the number of nodes with feature value $\alpha$ in cluster $C_i$. Constraint $\omega_{EQ}$ requires $\alpha_i=\alpha_j$. Restricting all nodes of feature value $\alpha$ to be distributed equally may be very strict for some applications. A generalization of this constraint requires the distribution ratio to be greater than a pre-defined threshold $\beta$,  $\frac{\alpha_i}{\alpha_j} > \beta$, for every pair of clusters~\cite{ding2020faster,galhotra2019lexicographically}.
This ratio captures the relative distribution of $\alpha$-valued nodes across the clusters.

\vspace{4pt}
\noindent \textbf{Interpretability ($\bm{\omega_{IC}}$).}~~ This constraint considers a specific feature of interest (say `Color') and requires that all clusters are homogenized according to the considered feature. The homogeneity of a cluster with respect to a feature $f$ is characterized by the fraction of nodes of a cluster that have same feature value for the input feature. For example, consider a cluster with $7$ blue nodes, $2$ red nodes and $1$ green colored node. Then the homogeneity of the cluster with respect to the feature `color' and feature value `blue' is $0.7$. Generating interpretable clusters requires satisfying a homogeneity threshold $\beta$--- each cluster is required to have at least $\beta$ fraction of nodes with respect to $f$~\cite{saisubramanian2020balancing}.\\

These constraints, described by a feature $f$ and a threshold $\beta$, are summarized in Table~\ref{tab:constraint}. Given the set of candidate constraints $\Omega$ and demonstrations $\Lambda$, LCD aims to identify the constraint, along with its feature and corresponding threshold, that has the maximum likelihood. 

\section{Solution Approach} \label{sec:alg}
We begin by describing a naive approach to infer the constraint thresholds and discuss its limitations. We then propose an algorithm that infers the constraint threshold and generates clusters using existing clustering algorithms. To extend our approach to handle fairness metrics that are not currently supported by the existing algorithms, we present a greedy clustering approach. 

\subsection{Naive algorithm}
A naive approach to infer the clustering constraint from a given set of demonstrations $\Lambda$ is to exhaustively generate all possible clusterings for each type of constraint, its corresponding feature, and  threshold. Among these clusterings, the most likely set of clusters correspond to the one having maximum conformance with the demonstrations $\Lambda$. This approach is highly effective in identifying the desired set of clusters but does not scale, given that the fairness constraint threshold can take infinite values. For example, the disparate impact constraint $\omega_{GF}$ take two parameters $\alpha,\beta $ as input, which can take any value in the range $[0,1]$. To efficiently infer the constraint, we build on the following observations.
\begin{itemize}
	\item $k$-center clustering (and centroid-based clustering in general) aims to minimize the maximum distance of any node from the cluster center. Therefore, it is very unlikely that a particular node is assigned to the farthest center.
	\item Our problem can be modeled as a likelihood estimation problem, where the most likely constraint is expected to correspond to the ground truth constraint.
\end{itemize} 

Given a cluster $C$, we can estimate the most likely threshold of $C$ with respect to a constraint, by following the procedure discussed in the previous section. For example, if a cluster has $3$ red nodes and $5$ blue nodes, we can infer that the fraction of nodes of each color is at least $\min (3/8,5/8)$. Using this constraint threshold estimation, 
a simple approach is to estimate the likelihood of different clustering constraints by considering each demonstration as an independent set of clusters and calculate  threshold with respect to each constraint over these clusters. A major drawback of this approach is that a single clustering demonstration generally does not contain representation from all $k$ clusters and feature values for the considered feature. This may mislead the likelihood estimation when a demonstration considered in isolation. 

\begin{example}
Consider an optimal clustering for $\bm{\omega_{GF}}$, denoted by $C_1 = \{r_1,r_2,b_1,b_2\}$ and $C_2 = \{r_3,b_3\}$, where $r_1,r_2,r_3$ are the red nodes and $b_1,b_2,b_3$ are blue colored nodes. Suppose one of the demonstration  is $\lambda=\{(r_1,r_2),(r_3)\}$. Based on this demonstration, the inferred constraint is $\bm{\omega_{IC}}$ with $\beta=1$, which incorrectly indicates that all the nodes in a cluster have the same color.
\end{example}

\subsection{Proposed Algorithm}
We present Algorithm~\ref{alg:likelihood} that clusters the given demonstrations and processes these clusters to infer the most likely constraint and its parameter values (feature and threshold). Figure~\ref{fig:pipeline} presents the high level architecture of our proposed technique. Given a collection of demonstrations generated by an expert, our algorithm greedily merges them to generate $k$ clusters. These clusters are then used to calculate the likelihood of each fairness constraint and infers the clustering with maximum likelihood.

\begin{figure}
	\includegraphics[width=\columnwidth]{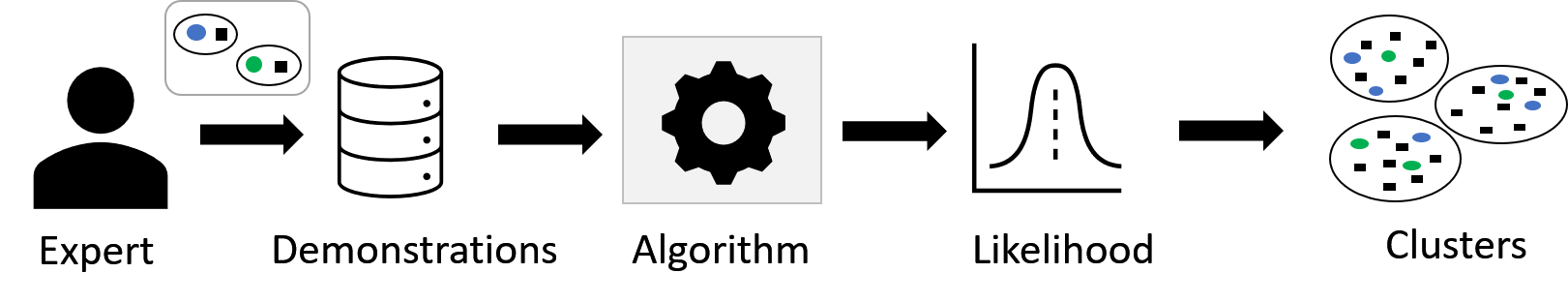}
	\caption{Overview of solution approach.}
	\label{fig:pipeline}
\end{figure}

\begin{algorithm}[t]
	\caption{Maximum Likelihood Constraint \label{alg:likelihood}}
	\begin{algorithmic}[1]
		\REQUIRE Demos $\Lambda$, Nodes $V$, Features of interest $F$
		\ENSURE Clusters $\mathcal{C}$
		\FOR{$v\in \Lambda$}
		\STATE $C\leftarrow C\cup \{v\}$
		\ENDFOR
		\STATE $\mathcal{C}\leftarrow \texttt{ConstructClusters}(\Lambda)$
		\WHILE{$|\mathcal{C}|>k$}
		\STATE $\mathcal{C}\leftarrow \texttt{MergeClosest}(\mathcal{C})$
		\ENDWHILE
		\STATE $T(\omega,f)\leftarrow 0, \forall \omega \in \Omega ,f\in F$
		\FOR{$\omega\in \Omega ,f\in F$}
		\STATE $T(\omega,f)\leftarrow \texttt{CalculateThreshold}(C,\omega,f)$
		\ENDFOR
		\FOR{$(\omega,f)\in T$}
		\STATE $C_{\omega,f}\leftarrow \textsc{Cluster}(\omega,f,V)$
		\STATE $ \mathcal{L}_{\omega,f}\leftarrow \texttt{Likelihood}(C_{\omega,f},\Lambda)$
		\ENDFOR
		\STATE $(\omega,f) \leftarrow \arg \max (\mathcal{L}_{\omega,f})$
	\end{algorithmic}
\end{algorithm}

Algorithm~\ref{alg:likelihood} proceeds in two phases. In the \emph{first phase} (Lines 1-5), the algorithm forms $k$ clusters of the demonstrations $\Lambda$.  This phase initializes a clustering $\mathcal{C}$ over the set of nodes in demonstrations $\Lambda$ ($\texttt{ConstructClusters}(\Lambda)$) which correspond to the different clusters identified by the expert. 
Note that the set $\mathcal{C}$ may contain more than $k$ clusters. In that case, we greedily merge the closest pair of clusters until $k$ clusters have been identified. The distance between any pair of clusters $C_i, C_j\in \mathcal{C}$ is measured as the maximum distance between any pair of nodes in $C_i$ and $C_j$:
\begin{align*}
d(C_1,C_2) = \max_{u\in C_1, v\in C_2} d(u,v).
\end{align*}

In the \emph{second phase} (Lines 6-12), the identified clusters $\mathcal{C}$ are processed to calculate the most likely threshold with respect to each feature and constraint (denoted by $T$). The identified threshold is used to generate a set of $k$ clusters on the original dataset $V$ for each $\langle$constraint, feature$\rangle$ pair. At the end of this step, there are $|F| \times |\Omega|$ clusterings, with one of them corresponding to the intended set of clusters. 

To identify the set of clusters with maximum likelihood ($\mathcal{L}$), we calculate the accuracy of each clustering with respect to the input demonstrations and return the set of clusters that have the highest accuracy. The accuracy of a set of clusters $\mathcal{C}$ is calculated by labeling each pair of nodes as intra-cluster or inter-cluster, and then measuring the fraction of pairs that have same labels according to $\mathcal{C}$ and $\Lambda$. The accuracy estimate of the clusters $\mathcal{C}$ captures the likelihood of a particular constraint.

\paragraph{Complexity.} The first phase of Algorithm~\ref{alg:likelihood} is initialized with $O(|\Lambda|)$ demonstrations and iteratively reduced to $k$ clusters. In each iteration, it calculates the distance between pairs of clusters, resulting in $O(|\Lambda^2|)$ run time. The second phase considers all combinations of constraint and features, thereby performing clustering $|F|\times |\Omega|$ times where $F$ denotes the set of features for each node. Therefore, the run time complexity of Algorithm~\ref{alg:likelihood} to calculate clusters over the demonstrations is $O(\log^3 n)$ and it takes $O(n |F| |\Omega|)$ to construct clusters and calculate likelihood.

Algorithm~\ref{alg:likelihood} identifies the optimal set of clusters and the maximum likelihood constraints for a given set of demonstrations, assuming that a clustering technique \emph{exists} for an input constraint.  We now present a  greedy algorithm that does not rely on the clustering technique and greedily generates the set of clusters with maximum likelihood.

\begin{algorithm}[t]
	\caption{Greedy Algorithm for Novel Metrics\label{alg:greedy2}}
	\begin{algorithmic}[1]
		\REQUIRE Demos $\Lambda$, Nodes $V$, Features of interest $F$
		\ENSURE Clusters $\mathcal{C}$
		\FOR{$v\in V$}
		\STATE $C\leftarrow C\cup \{v\}$
		\ENDFOR
		\STATE $\mathcal{C}\leftarrow \texttt{ConstructClusters}(\Lambda)$
		\WHILE{$|\mathcal{C}|>k$}
		\STATE $\mathcal{C}\leftarrow \texttt{MergeClosest}(\mathcal{C})$
		\ENDWHILE
		\STATE $T\leftarrow$ Calculate constraint threshold of each constraint
		\FOR{$(\omega,f)\in L$}
		\STATE $\mathcal{L}(\omega,f)\leftarrow $ Calculate likelihood of each constraint
		\STATE Perform greedy adjustment to satisfy each constraint
		\STATE $ \mathcal{L}_{\omega,f}\leftarrow \texttt{Likelihood}(C_{\omega,f},\Lambda)$
		\ENDFOR
		\STATE Return the clustering corresponding $\arg \max (\mathcal{L}_{\omega,f})$
	\end{algorithmic}
\end{algorithm}

\subsection{Greedy Algorithm for Novel Metrics}
To handle the fairness objectives for which fair clustering algorithms do not currently exist, we present a greedy algorithm that generates $k$ clusters without assuming any knowledge about the clustering algorithm for the input constraints.

Our approach is outlined in Algorithm~\ref{alg:greedy2}. Given a collection of demonstrations $\Lambda$ and vertices $V$, the algorithm proceeds in two phases. The first phase of Algorithm~\ref{alg:greedy2} (Lines 1-5) is similar to that of Algorithm~\ref{alg:likelihood}, where all nodes are initialized as singleton clusters and all nodes that are grouped together in $\Lambda$ are merged. The closest pair of clusters in $\mathcal{C}$ are sequentially merged until $k$ clusters have been identified. Let $\mathcal{C}$ denote the final set of $k$ clusters. 

The second phase (Lines 6-12) begins with estimating the constraint threshold ($T$), as in Algorithm~\ref{alg:likelihood}. The estimated threshold is used to greedily post-process the clusters according to each constraint. This greedy processing transfers the nodes from one cluster to another, following the constraint requirements and is similar to local search techniques that move nodes between clusters to satisfy a constraint. At the end of this phase, there are $|F|\times |\Omega|$ different sets of clusters, with each optimizing a different fairness constraint. The clustering that has the highest likelihood with the input demonstrations is returned as the final set of clusters. The likelihood is estimated in terms of the accuracy of pairwise intra-cluster and inter-cluster labels.

\section{Theoretical Analysis\label{sec:theory}}
In this section, we analyze the effectiveness of Algorithm~\ref{alg:likelihood} to identify the constraints even when the oracle presents  $\Theta(\log n)$ demonstrations, where $n = |V|$. We first show that the estimated constraint is accurate with a high probability under the assumption that the oracle chooses nodes uniformly at random. We then extend the analysis to settings where the presented demonstrations are biased towards specific clusters. This analysis assumes that each demonstration $\lambda\in \Lambda$ has constant size\footnote{ Our proofs extend to the setting where demonstration size is $\Omega(1)$ too.}.

Let $\Tilde{V}$ denote the set of nodes that have been clustered in atleast one of the demonstrations. 
Lemma~\ref{lem:sample} shows that the sample $\Tilde{V}$ contains $\Theta(\log n)$ from a cluster $C^*$ whenever $|C^*|\geq \frac{n}{k}$.

\begin{lemma}
	Consider a random sample $\Tilde{V}\subseteq V$ such that  $|\Tilde{V}|\geq 10~k \log n$ and each node  in $\Tilde{V}$ is chosen uniformly at random,  then $|\Tilde{V} \cap C^*| \geq 5\log n$, $\forall C^* \geq \frac{n}{k}$.\label{lem:sample}
\end{lemma}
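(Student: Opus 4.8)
The plan is to apply a Chernoff bound to the number of sampled nodes that land in a fixed large cluster $C^*$. First I would set up the sampling model: since $\tilde V$ consists of $m := |\tilde V| \geq 10 k \log n$ nodes, each chosen uniformly at random from $V$, the indicator that the $i$-th sampled node lies in $C^*$ is a Bernoulli random variable with success probability $p := |C^*|/n \geq 1/k$ (using the hypothesis $|C^*| \geq n/k$). Whether one models the $m$ draws as with or without replacement, the expected number of hits is $\mu := mp \geq (10 k \log n)(1/k) = 10 \log n$; in the without-replacement case one can still invoke the standard Chernoff bound for sampling without replacement (Hoeffding), so this distinction is not an obstacle.

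Next I would apply the multiplicative Chernoff lower-tail bound: for $X = |\tilde V \cap C^*|$ with $\E[X] = \mu \geq 10 \log n$,
\[
\Pr\bigl[X \leq (1-\delta)\mu\bigr] \leq \exp\!\left(-\frac{\delta^2 \mu}{2}\right).
\]
Choosing $\delta = 1/2$ gives $\Pr[X \leq \mu/2] \leq \exp(-\mu/8) \leq \exp\bigl(-(10 \log n)/8\bigr) = n^{-10/8} = n^{-5/4}$, and since $\mu/2 \geq 5 \log n$, the event $X < 5 \log n$ is contained in the event $X \leq \mu/2$. Hence $\Pr[|\tilde V \cap C^*| < 5 \log n] \leq n^{-5/4}$ for the fixed cluster $C^*$. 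Finally, I would take a union bound over the at most $k \leq n$ clusters satisfying $|C^*| \geq n/k$, yielding a failure probability of at most $k \cdot n^{-5/4} \leq n^{-1/4} = o(1)$, so with high probability \emph{every} such cluster is hit at least $5 \log n$ times, which is the claimed statement.

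The only mild subtlety — and what I would treat as the main point requiring care rather than a true obstacle — is that the lemma as literally stated asserts a deterministic conclusion, so it should be read as holding "with high probability" (the theorem-level statements in this section are explicitly high-probability claims); the constants $10$ and $5$ are chosen precisely so that the union bound over $k$ clusters still leaves a polynomially small failure probability. If one instead wanted the size bound $|\tilde V| = \Theta(k \log n)$ rather than just a lower bound, a matching upper-tail Chernoff bound gives $|\tilde V \cap C^*| = \Theta(\log n)$, but for the stated lemma only the lower tail is needed.
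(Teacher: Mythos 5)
Your proof is correct and takes essentially the same approach as the paper's: compute the expected overlap $\mathbb{E}[|\Tilde{V}\cap C^*|]\geq 10\log n$ and apply a multiplicative Chernoff lower-tail bound to conclude $|\Tilde{V}\cap C^*|\geq 5\log n$ with high probability. The paper's version is terser---it omits the union bound over clusters, the with/without-replacement point, and the explicit choice of $\delta$ that you spell out---but the core argument is identical.
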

\begin{proof}
	Let $X_v$ be a binary indicator variable such that $X_v=1$ if $v\in \Tilde{V}$ and $0$ otherwise.
	Since, each record $v$ is chosen uniformly at random, $Pr[v\text{ is chosen}] = \frac{|\Tilde{V}|}{n}$.
	Therefore, $$E\left[|\Tilde{V} \cap C^*| \right] \geq  \frac{|\Tilde{V}|}{n} |C^*| = 10 \log n.$$
	Using Chernoff bound, $|\Tilde{V} \cap C^*| \geq 5\log n$ with a probability of $1-\frac{1}{n^2}$.
\end{proof}

Consider a set of ground truth clusters, $\mathcal{C}^*=\{C_1^*,\ldots, C_k^*\}$, such that $\forall |C_i^*|$ satisfy one of the clustering constraint $\omega \in \Omega$. This means that $\exists i$ such that $|C_i^*|\geq \frac{n}{k}$. For the next part of the proof, we will consider this $C_i^*$ to analyze the quality of estimated constraint threshold. 

\begin{lemma}
	Suppose the optimal cluster $C_i^*$ satisfies the constraint, $\omega_{GF}$ with parameters $[\alpha,\beta]$ and $|\Tilde{V}\cap C_i^*|=\Theta(\log n)$, then the estimated threshold on processing $|\Tilde{V}\cap C_i^*|$ is $[\alpha(1-\epsilon),\beta(1+\epsilon)]$ with a high probability.\label{lem:gf}
\end{lemma}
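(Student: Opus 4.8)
The plan is to reduce the statement to a standard concentration bound on empirical feature frequencies inside the single cluster $C_i^*$. First I would observe that, conditioned on the event $|\Tilde{V}\cap C_i^*| = m$ with $m=\Theta(\log n)$, the set $\Tilde{V}\cap C_i^*$ is a uniformly random $m$-subset of $C_i^*$, since the oracle draws the nodes of $\Tilde V$ uniformly at random from $V$. Write $p_r$ and $p_b$ for the true fractions of the two values of the sensitive feature inside $C_i^*$; the hypothesis that $C_i^*$ satisfies $\omega_{GF}$ with parameters $[\alpha,\beta]$ means $p_r,p_b\in[\alpha,\beta]$. The threshold estimated from the nodes of $C_i^*$ appearing in the demonstrations is $\hat\alpha=\min(\hat p_r,\hat p_b)$ and $\hat\beta=\max(\hat p_r,\hat p_b)$, where $\hat p_r,\hat p_b$ are the empirical fractions in $\Tilde V\cap C_i^*$. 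Hence it suffices to show that each of $\hat p_r,\hat p_b$ lies within a $(1\pm\epsilon)$ multiplicative factor of the corresponding true fraction with high probability.

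Second, I would apply a multiplicative Chernoff bound to the empirical count $m\hat p_r$ (this is valid for sampling without replacement, since by Hoeffding's reduction a hypergeometric count concentrates at least as sharply as a Binomial$(m,p_r)$): $\Pr\!\big[\,|\hat p_r - p_r| > \epsilon p_r\,\big] \le 2\exp(-\epsilon^2 p_r m/3)$. Treating the constraint thresholds as constants we have $p_r\ge\alpha=\Omega(1)$, and since $m\ge 5\log n$ by the hypothesis of the lemma (guaranteed via Lemma~\ref{lem:sample}), the right-hand side is at most $n^{-\Omega(\epsilon^2)}$; choosing the constant hidden in $m=\Theta(\log n)$ large enough drives it below $1/n^2$. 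A union bound over the (constantly many) feature values then gives, with probability at least $1-2/n^2$, that simultaneously $\hat p_r\in[(1-\epsilon)p_r,(1+\epsilon)p_r]$ and $\hat p_b\in[(1-\epsilon)p_b,(1+\epsilon)p_b]$.

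Third, I would combine the two bounds: on this event $\hat\alpha=\min(\hat p_r,\hat p_b)\ge(1-\epsilon)\min(p_r,p_b)\ge(1-\epsilon)\alpha$ and, symmetrically, $\hat\beta=\max(\hat p_r,\hat p_b)\le(1+\epsilon)\max(p_r,p_b)\le(1+\epsilon)\beta$, while the reverse inequalities show both endpoints stay within a $(1\pm\epsilon)$ factor of the true fractions they estimate. Therefore the estimated threshold interval is contained in $[\alpha(1-\epsilon),\beta(1+\epsilon)]$, which is the claim.

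The routine parts — the Chernoff estimate and the union bound — follow the template already used for Lemma~\ref{lem:sample}; the two points that need care are (i) justifying that within $C_i^*$ the sampled nodes form a uniform (exchangeable) subset, so that a concentration inequality applies at all, and (ii) the dependence on $\alpha$, since the multiplicative guarantee degrades as $\alpha\to 0$ and the argument implicitly treats the constraint thresholds as constants (for vanishing $\alpha$ one would instead state an additive $\pm\epsilon$ guarantee via Hoeffding's inequality). I expect (ii) to be the main point a careful reader will want spelled out; everything else is mechanical.
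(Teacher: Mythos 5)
Your proposal is correct and follows essentially the same route as the paper: a multiplicative Chernoff bound on the per-feature-value counts within $\Tilde{V}\cap C_i^*$, followed by a union bound over the (constantly many) feature values, concluding that every empirical fraction is within a $(1\pm\epsilon)$ factor of the truth. The only difference is bookkeeping --- the paper concentrates the numerator (count of nodes with value $a_j$) and the denominator ($|\Tilde{V}\cap C_i^*|$) separately around their expectations and bounds their ratio by $\left[\frac{1-\epsilon/2}{1+\epsilon/2},\frac{1+\epsilon/2}{1-\epsilon/2}\right]$, whereas you condition on the within-cluster sample size and bound the hypergeometric fraction directly --- and your caveat that the argument implicitly treats the thresholds as constants applies equally to the paper's proof, which needs $\alpha_i=\Omega(1)$ for the expected count to be $\Theta(\log n)$.
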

\begin{proof}
	Suppose the optimal fairness constraint $\omega_{GF}$ considers a feature $f$ with parameters $[\alpha,\beta]$. Let $A=\{a_1,\ldots, a_t\}$ denote the domain of values for the feature $f$. According to the fairness constraint, the subset of $C_i^*$ that has feature value $a_j, \forall j$ is within a fraction $[\alpha,\beta]$.  Suppose the fraction of nodes with feature  value $a_i$ be $\alpha_i$.
	
	We claim that the fraction of nodes with feature $\alpha_i$ in the sample $\Tilde{V}\cap C^*$ is within $[\alpha_i(1-\epsilon),\alpha_i(1+\epsilon)]$ with a high probability, where $\epsilon$ is a small constant.
	Let $X_v$ denote a binary random variable such that $X_v$ is one if $v$ is present in the sample $\Tilde{V}$ and $0$ otherwise.
	The expected number of nodes that have feature $\alpha_i$ and belong to the set $\Tilde{V}\cap C_i^*$ is $\frac{\alpha_i |C_i^*| |\Tilde{V}|}{n} = \Theta(\log n)$.
	Following the proof of Lemma~\ref{lem:sample} and using Chernoff bound, we get that the number of nodes with value $a_j$ is within a factor of  $[(1-\epsilon/2),(1+\epsilon/2)]$ of the expected value with a high probability.
	Additionally, the expected number of nodes that belong to the sample $|C_i^*\cap \Tilde{V}|= \frac{|C_i^*| |\Tilde{V}|}{n} $ and the number of nodes is within a factor of $[(1-\epsilon/2),(1+\epsilon/2)]$ with a high probability.
	
	Therefore, the ratio of node that have feature value $a_i$ and belong to the sample $\Tilde{V}\cap C_i^*$ is always within a factor of $\left[\frac{1-\epsilon/2}{1+\epsilon/2},\frac{1+\epsilon/2}{1-\epsilon/2}\right] \sim [1-\epsilon,1+\epsilon]$ for small values of $\epsilon$. Taking a union bound over all feature values, we guarantee that the estimated parameter is within a factor of $[1-\epsilon,1+\epsilon]$ with a high probability.
\end{proof}

\begin{lemma}
	Suppose the optimal cluster $C_i^*$ satisfies the constraint, $\omega_{IC}$ with parameter $\beta$ (some constant) and $|\Tilde{V}\cap C_i^*|=\Theta(\log n)$, then the estimated threshold on processing $|\Tilde{V}\cap C_i^*|$ is $[\beta (1-\epsilon),\beta(1+\epsilon)]$ with a high probability.\label{lem:ic}
\end{lemma}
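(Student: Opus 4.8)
The plan is to mirror the argument for Lemma~\ref{lem:gf}, the only new wrinkle being that the interpretability threshold is a \emph{maximum} over feature values rather than a tuple of per-value ratios. Fix the feature of interest $f$ with domain $A=\{a_1,\dots,a_t\}$, and let $\alpha_j$ be the fraction of $C_i^*$ whose $f$-value equals $a_j$; by definition of $\omega_{IC}$ the homogeneity is $\beta=\max_j\alpha_j$, attained at (say) $a_1$, so $\alpha_1=\beta$ is a constant bounded away from $0$. The threshold estimated from $\Tilde V\cap C_i^*$ is $\hat h=\max_j\hat\alpha_j$, where $\hat\alpha_j$ is the fraction of $\Tilde V\cap C_i^*$ with $f$-value $a_j$. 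As in Lemmas~\ref{lem:sample} and \ref{lem:gf}, for each $v\in C_i^*$ let $X_v$ indicate $v\in\Tilde V$, so $\Pr[X_v=1]=|\Tilde V|/n$; then the sampled count of any value $a_j$ has mean $\alpha_j|C_i^*|\,|\Tilde V|/n$ and the sampled size $|\Tilde V\cap C_i^*|$ has mean $|C_i^*|\,|\Tilde V|/n=\Theta(\log n)$.

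The first step is the lower bound $\hat h\ge\beta(1-\epsilon)$. Because $\alpha_1=\beta$ is a constant, the number of sampled $a_1$-nodes has mean $\Theta(\log n)$; a Chernoff bound, exactly as in Lemma~\ref{lem:gf}, keeps both this count and the sampled size $|\Tilde V\cap C_i^*|$ within a $[1-\epsilon/2,\,1+\epsilon/2]$ factor of their means except with probability $O(n^{-2})$, and dividing one by the other gives $\hat\alpha_1\in[\beta(1-\epsilon),\beta(1+\epsilon)]$ for small $\epsilon$. Hence $\hat h\ge\hat\alpha_1\ge\beta(1-\epsilon)$, regardless of which value realizes the maximum in the sample.

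For the matching upper bound one shows $\hat\alpha_\ell\le\beta(1+\epsilon)$ for every $a_\ell$, splitting the values at a small constant $c$. If $\alpha_\ell\ge c$ then its sampled mean is $\Theta(\log n)$, the same multiplicative Chernoff bound applies, and since $\alpha_\ell\le\beta$ the resulting ratio is at most $\beta\cdot\frac{1+\epsilon/2}{1-\epsilon/2}\le\beta(1+\epsilon)$. If $\alpha_\ell<c$, its sampled mean $\mu_\ell<c\cdot\Theta(\log n)$ is too small for the relative-error bound, so instead one uses the large-deviation form $\Pr[\mathrm{count}_\ell\ge R]\le(e\mu_\ell/R)^{R}$ with $R=\Theta(\epsilon\beta\log n)$; choosing $c$ small enough makes this $n^{-\omega(1)}$, so no rare value inflates $\hat h$ past $\beta(1+\epsilon)$. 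A union bound over the $t$ (constant) values and the finitely many concentration events then gives $\hat h\in[\beta(1-\epsilon),\beta(1+\epsilon)]$ with high probability.

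The main obstacle is precisely this handling of rare feature values. For $\omega_{GF}$ every value is forced to have fraction at least $\alpha=\Omega(1)$, so the standard relative-error Chernoff bound concentrates all of them uniformly; $\omega_{IC}$ only constrains the dominant value, so the remaining values may have vanishing fraction and one must fall back on the large-deviation tail to rule out a rare value accidentally becoming the sampled maximum. Apart from that, every step is a direct transcription of the sampling-and-concentration calculations already carried out for Lemmas~\ref{lem:sample} and \ref{lem:gf}.
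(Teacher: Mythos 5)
Your proposal is correct and follows essentially the same route as the paper: the paper's proof likewise introduces the indicator variables $X_v$, computes the expected count $\frac{\beta |C_i^*||\Tilde{V}|}{n}$ of the dominant feature value in $\Tilde{V}\cap C_i^*$, and invokes the Chernoff-based ratio argument of Lemma~\ref{lem:gf} to place the sampled fraction in $[\beta(1-\epsilon),\beta(1+\epsilon)]$. The one place you go beyond the paper is the upper bound on the sampled maximum: the paper tracks only the dominant value and does not explicitly rule out a rare feature value overtaking it in the sample, whereas your case split and large-deviation tail for values with fraction below $c$ close that (minor) gap.
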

\begin{proof}
	Suppose the optimal cluster $C_i^*$ satisfies $\omega_{IC}$  with parameter $\beta$ with respect to a feature value $\alpha$. Therefore, $\beta$ fraction of the nodes in $C_i^*$ have the feature value $\alpha$. To analyze the fraction of nodes of feature value $\alpha$, we define binary random variable $X_v$ for each $v$ such that $X_v=1$ if $v\in \Tilde{V}$ and $0$ otherwise. The expected number of nodes with feature value $\alpha$ in the sample $\Tilde{V}\cap C_i^*$ is $\frac{\beta |C_i^*||\Tilde{V}|}{n}$. Following the analysis of Lemma~\ref{lem:gf}, we get that the fraction of nodes of color $\alpha$ is within a factor of $[\beta (1-\epsilon),\beta(1+\epsilon)]$ with a probability of $1-\frac{1}{n}$.
\end{proof}

\begin{lemma}
Suppose the optimal cluster $C_i^*$ satisfies the constraint, $\omega_{EQ}$ with parameter $\beta$ and $|\Tilde{V}\cap C_i^*|=\Theta(\log n)$,, then the estimated threshold on processing $|\Tilde{V}\cap C_i^*|$ is $[\beta (1-\epsilon),\beta(1+\epsilon)]$ with a high probability.\label{lem:eq}
\end{lemma}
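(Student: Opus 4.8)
The plan is to mirror the arguments of Lemmas~\ref{lem:gf} and~\ref{lem:ic}, since $\omega_{EQ}$ is once more a statement about a ratio of feature-value counts. Recall that $\omega_{EQ}$ with threshold $\beta$ (a constant) requires, for the feature value $\alpha$ of interest, $\alpha_i/\alpha_j > \beta$ for every ordered pair of ground-truth clusters, where $\alpha_j$ denotes the number of $\alpha$-valued nodes in $C_j^*$; equivalently, with $\alpha_{\max} = \max_j \alpha_j$ and $\alpha_{\min} = \min_j \alpha_j$, the true threshold is $\alpha_{\min}/\alpha_{\max} \ge \beta$. On the sampled clusters the algorithm forms the analogous estimate $\hat\beta = \min_{i \neq j} \hat\alpha_i/\hat\alpha_j$, where $\hat\alpha_j$ counts the $\alpha$-valued nodes in $\Tilde{V} \cap C_j^*$, and the goal is to show $\hat\beta \in [\beta(1-\epsilon), \beta(1+\epsilon)]$ with high probability.

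First I would establish concentration of each count $\hat\alpha_j$ individually. Define a binary variable $X_v$ with $X_v = 1$ if $v \in \Tilde{V}$ and $0$ otherwise, exactly as in Lemma~\ref{lem:gf}; then $E[\hat\alpha_j] = \alpha_j |\Tilde{V}|/n$. The hypothesis $|\Tilde{V} \cap C_i^*| = \Theta(\log n)$ together with $E[|\Tilde{V} \cap C_i^*|] = |C_i^*||\Tilde{V}|/n$ pins $|C_i^*||\Tilde{V}|/n$ at $\Theta(\log n)$; since $C_i^*$ satisfies $\omega_{EQ}$, every $\alpha_j$ lies within the constant factor $\beta$ of $\alpha_i$, and---using the standing assumption $|\Tilde{V}| = \Omega(k\log n)$ and the mild (implicit) assumption that a constant fraction of $V$ carries value $\alpha$, so that $\alpha_i = \Theta(|C_i^*|) = \Theta(n/k)$---we get $E[\hat\alpha_j] = \Theta(\log n)$ for every $j$. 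A Chernoff bound, as in Lemma~\ref{lem:sample}, then gives $\hat\alpha_j \in [(1-\epsilon/2)E[\hat\alpha_j], (1+\epsilon/2)E[\hat\alpha_j]]$ with probability at least $1 - n^{-2}$ once the constant in $|\Tilde{V}| \ge 10 k \log n$ is taken large enough, and a union bound over the $k \le n$ clusters (and over the constantly many candidate feature values, if $\alpha$ is also searched over) keeps the total failure probability at most $1/n$.

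Next I would pass from counts to the ratio, as in Lemma~\ref{lem:gf}. On the good event, for every pair $i,j$,
\[
\frac{\hat\alpha_i}{\hat\alpha_j} \in \left[ \frac{1-\epsilon/2}{1+\epsilon/2}\cdot\frac{\alpha_i}{\alpha_j}, \; \frac{1+\epsilon/2}{1-\epsilon/2}\cdot\frac{\alpha_i}{\alpha_j} \right] \subseteq \left[ (1-\epsilon)\frac{\alpha_i}{\alpha_j}, \; (1+\epsilon)\frac{\alpha_i}{\alpha_j} \right]
\]
for $\epsilon$ a small constant. Taking the minimum over pairs, and noting that both the pair attaining the true minimum ratio and the pair attaining the estimated minimum are squeezed by this two-sided bound, yields $\hat\beta \in [\beta(1-\epsilon), \beta(1+\epsilon)]$ with high probability, as claimed.

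The hard part will be controlling the denominator of the estimated ratio: concentration of $\hat\alpha_j$ requires $E[\hat\alpha_j] = \Omega(\log n)$, so a single cluster nearly devoid of $\alpha$-valued nodes would make $\hat\beta$ unreliable. This is precisely where the $\omega_{EQ}$ structure must be invoked---equal representation up to the constant factor $\beta$ certifies that no cluster is $\alpha$-deficient relative to $C_i^*$, so every denominator has expectation $\Theta(\log n)$---together with the assumption that value $\alpha$ is not vanishingly rare in $V$. Once these are in hand, the remainder is the same Chernoff-plus-union-bound bookkeeping already carried out in Lemmas~\ref{lem:gf} and~\ref{lem:ic}.
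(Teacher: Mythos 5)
Your proposal is correct and follows the same route the paper intends: its entire proof of this lemma is the single sentence ``This analysis is similar to that of Lemma~\ref{lem:ic},'' i.e.\ the Chernoff-plus-union-bound concentration of sampled feature-value counts already carried out in Lemmas~\ref{lem:sample} and~\ref{lem:gf}, followed by the ratio bound $\left[\frac{1-\epsilon/2}{1+\epsilon/2},\frac{1+\epsilon/2}{1-\epsilon/2}\right]\subseteq[1-\epsilon,1+\epsilon]$. In fact your write-up is more careful than the paper's: you correctly flag that $\omega_{EQ}$ is a \emph{cross-cluster} constraint, so the estimate needs concentration of the $\alpha$-count in every cluster's sample (not just $C_i^*$), which forces the additional---and, in the paper, unstated---assumptions that each cluster contributes $\Theta(\log n)$ sampled nodes and that the feature value $\alpha$ is not vanishingly rare; making those explicit is a genuine improvement over the paper's one-line argument.
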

\begin{proof}
This analysis is similar to that of Lemma~\ref{lem:ic}.
\end{proof}

Lemmas~\ref{lem:gf},~\ref{lem:ic} and~\ref{lem:eq}  show that the estimated parameter  from a cluster $C_i^*$ with respect to the considered fairness constraints is within a factor of $[(1-\epsilon),(1+\epsilon)]$ of the true constraint threshold with a high probability. Using these results, we prove the following theorem.

\begin{theorem}
Given a collection of nodes $V$ and randomly chosen globally informative demonstrations $\Lambda = \Theta(\log n)$ such that each demonstration reveals the true cluster affiliation of a constant number of records, then the optimal cluster constraint  is identified within a multiplicative factor of $[(1-\epsilon),(1+\epsilon)]$  with a high probability.\label{thm:demo2}
\end{theorem}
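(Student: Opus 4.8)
\textbf{Proof proposal for Theorem~\ref{thm:demo2}.}

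The plan is to combine the sampling guarantee of Lemma~\ref{lem:sample} with the per-constraint threshold-estimation guarantees of Lemmas~\ref{lem:gf}, \ref{lem:ic}, and \ref{lem:eq}, and then argue that a likelihood/accuracy-based comparison over the finite family $\Omega \times F$ picks out the ground-truth constraint. First I would observe that $\Lambda = \Theta(\log n)$ globally informative demonstrations, each revealing the true cluster ID of a constant number of nodes, together reveal the true affiliation of a sample $\Tilde{V}$ with $|\Tilde{V}| = \Theta(\log n)$; by choosing the hidden constant large enough we may assume $|\Tilde{V}| \geq 10k\log n$, so Lemma~\ref{lem:sample} applies and gives $|\Tilde{V}\cap C_i^*| = \Theta(\log n)$ for every ground-truth cluster with $|C_i^*|\geq n/k$. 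Because globally informative demonstrations let us cross-reference cluster IDs across demonstrations (as discussed after Definition~\ref{defn:global}), the first phase of Algorithm~\ref{alg:likelihood} correctly reconstructs the restriction of $\mathcal{C}^*$ to $\Tilde{V}$ — no erroneous merges occur — so the clusters fed into the threshold-estimation step are exactly $\{\Tilde{V}\cap C_i^*\}_i$.

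Next I would invoke Lemmas~\ref{lem:gf}–\ref{lem:eq}: for the true constraint $\omega_F$ with its true feature $f^*$ and true threshold(s), \texttt{CalculateThreshold} applied to $\Tilde{V}\cap C_i^*$ returns an estimate within a $[(1-\epsilon),(1+\epsilon)]$ multiplicative factor of the truth, with probability at least $1 - 1/n$ per cluster; a union bound over the at most $k$ relevant clusters and over the at most $|F||\Omega|$ (constant) constraint–feature pairs keeps the overall failure probability $O(k|F||\Omega|/n) = o(1)$. Conditioned on this good event, running the off-the-shelf clustering routine \textsc{Cluster}$(\omega_F, f^*, V)$ with the estimated threshold produces a clustering that, up to the $\epsilon$-slack, satisfies the same constraint as $\mathcal{C}^*$ and hence agrees with the demonstrations on (essentially) all intra/inter-cluster pair labels, yielding likelihood/accuracy close to $1$. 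The final step is the separation argument: for any \emph{wrong} pair $(\omega, f) \neq (\omega_F, f^*)$, the clustering $C_{\omega,f}$ either violates the true constraint pattern or homogenizes/distributes along the wrong feature, and therefore disagrees with the demonstrations on a constant fraction of node pairs — since each demonstration is an i.i.d. sample of $\mathcal{C}^*$, with $\Theta(\log n)$ demonstrations the empirical pairwise accuracy concentrates around its mean, which is bounded away from $1$ for the wrong pairs and near $1$ for $(\omega_F,f^*)$; hence $\arg\max_{\omega,f}\mathcal{L}_{\omega,f} = (\omega_F, f^*)$ with high probability.

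The main obstacle is this last separation step: it requires that every incorrect constraint–feature pair induce a detectable, constant-fraction discrepancy in pairwise labels on a \emph{random} sample, which is not automatic. One needs either an explicit identifiability/margin assumption on $\Omega$ — that the ground-truth clustering is not simultaneously (near-)optimal for two distinct constraints on the given instance — or a structural argument that the candidate constraints in Table~\ref{tab:constraint} are pairwise distinguishable on any instance with $\max_i|C_i^*|\geq n/k$. I would state such a margin condition explicitly (call it a \emph{constraint-separation gap} $\Delta$), show the likelihood gap between the true pair and any wrong pair is at least $\Delta - O(\epsilon)$ in expectation, and then apply a Chernoff/Hoeffding bound over the $\Theta(\log n)$ demonstrations (equivalently, over the $\Theta(\log^2 n)$ revealed node pairs) to make the empirical gap positive with probability $1 - 1/\mathrm{poly}(n)$. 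A secondary, more routine obstacle is propagating the multiplicative $\epsilon$-error through \textsc{Cluster} — i.e.\ arguing that clustering with a slightly perturbed threshold still yields a clustering whose pairwise-label agreement with $\mathcal{C}^*$ degrades only by $O(\epsilon)$ — which I would handle by a stability/robustness remark about the specific clustering subroutines, or by folding it into the $\epsilon$ in the theorem statement.
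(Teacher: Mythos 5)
Your proposal follows the same route as the paper's proof: apply Lemma~\ref{lem:sample} to get $\Theta(\log n)$ sampled nodes from each sufficiently large ground-truth cluster, then invoke Lemmas~\ref{lem:gf}, \ref{lem:ic} and~\ref{lem:eq} to conclude that the estimated threshold is within a $[(1-\epsilon),(1+\epsilon)]$ factor of the truth. The paper's proof stops there and simply asserts that Algorithm~\ref{alg:likelihood} therefore returns the maximum-likelihood constraint; your final separation step --- arguing that every wrong $(\omega,f)$ pair incurs a constant likelihood deficit that concentrates over the $\Theta(\log n)$ demonstrations, and flagging that this requires an explicit identifiability assumption on $\Omega$ --- does not appear in the paper at all. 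That gap is real (the paper only acknowledges possible ambiguity informally, via a footnote in the experimental setup stating that ties among constraints ``do not arise whenever $|\Lambda|>5$''), so your more careful treatment of the argmax step strengthens the argument rather than deviating from it; everything else matches the paper's proof.
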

\begin{proof}
Let $\Lambda$ denote a collection of globally informative demonstrations such that $|\Lambda|\!=\!\Theta(\log n)$ and let $\Tilde{V}\!= \cup_{\lambda_g\in \Lambda} \lambda_g$.  Using Lemma~\ref{lem:sample}, we know that $\Tilde{V} \cap C_i^* =\!\Theta(\log n)$  for all $C_i^*$ containing $\Theta(n)$ nodes and therefore, using Lemmas~\ref{lem:gf},~\ref{lem:ic} and~\ref{lem:eq} we are guaranteed to estimate the correct threshold for the cluster $C_i^*$. Hence, Algorithm~\ref{alg:likelihood} correctly estimates the constraint with maximum likelihood with $\Theta(\log n)$ globally informative demonstrations.
\end{proof}

\begin{remark}
In this section we do not optimize for the constants in $\Theta$ notation because Algorithm~\ref{alg:likelihood} empirically converges in less than $2\log n$ demonstrations.
\end{remark}

We extend the proof of Theorem~\ref{thm:demo2} to the setting where the demonstrations are not globally informative but the ground truth clusters satisfy an interesting property, similar to the $\gamma$-margin property studied in prior work~\cite{ashtiani2016clustering}. We first define the margin property. Let $\Tilde{V}$ denote a subset of nodes and $\mathcal{C}^*$ denote the set of clusters corresponding optimal constraint. The set $\Tilde{V}$ is considered to satisfy margin property if $d(u,x) > d(u,v)$ where $u,v\in C_i^*\cap \Tilde{V}$ and  $x\in \Tilde{V}\setminus C_i^*$.

\begin{theorem}
Given a collection of nodes $V$ and randomly chosen demonstrations $\Lambda = \Theta(\log n)$ such that each demonstration reveals the clustering over a subset of nodes, then the optimal cluster constraint  is identified within a multiplicative factor of  $[(1-\epsilon),(1+\epsilon)]$  with a high probability if the sampled nodes  $\cup_{\lambda\in \Lambda} \lambda $ satisfy the margin property.\label{thm:likelihood}
\end{theorem}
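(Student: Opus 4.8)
The plan is to reduce Theorem~\ref{thm:likelihood} to Theorem~\ref{thm:demo2}. The only place the earlier proof used the \emph{global} nature of the demonstrations was to cross-reference cluster identities across $\lambda$'s; here that information is absent, so I must show instead that the first phase of Algorithm~\ref{alg:likelihood} --- the greedy complete-linkage merging, which uses $d(C_1,C_2)=\max_{u\in C_1,v\in C_2}d(u,v)$ --- still recovers the ground-truth clustering restricted to the sampled set $\Tilde{V}=\cup_{\lambda\in\Lambda}\lambda$. Once that is established, the second phase (threshold estimation and likelihood maximisation) is untouched, so Lemmas~\ref{lem:gf},~\ref{lem:ic}, and~\ref{lem:eq} and the likelihood argument carry over verbatim. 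As a first step I invoke Lemma~\ref{lem:sample}: with $|\Lambda|=\Theta(\log n)$ constant-size demonstrations, $|\Tilde{V}|=\Theta(\log n)$ (absorbing the $k$ factor into the hidden constant), so every cluster $C_i^*$ with $|C_i^*|\ge n/k$ contributes $|C_i^*\cap\Tilde{V}|=\Theta(\log n)$ nodes, and, when the ground-truth clusters are balanced, all $k$ of them appear in $\Tilde{V}$.

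The key step is to show that the greedy merging reconstructs the partition $\{C_i^*\cap\Tilde{V}\}_{i\in[k]}$. Because each demonstration is clustered according to $\mathcal{C}^*$, the output of \texttt{ConstructClusters}$(\Lambda)$ is a collection in which every cluster is \emph{pure}, i.e.\ contained in a single $C_i^*$; I will argue \texttt{MergeClosest} preserves purity. The crucial consequence of the margin property is an exchange inequality: for pure clusters $P,Q\subseteq C_i^*\cap\Tilde{V}$ and any pure cluster $R\subseteq C_j^*\cap\Tilde{V}$ with $j\ne i$, one has $d(P,R)>d(P,Q)$. Indeed, if $p_1\in P$ and $q_1\in Q$ attain the maximum defining $d(P,Q)$, then for any $r\in R\subseteq\Tilde{V}\setminus C_i^*$ the margin property (with $u=p_1$, $v=q_1$, $x=r$) gives $d(p_1,r)>d(p_1,q_1)$, hence $d(P,R)\ge d(p_1,r)>d(p_1,q_1)=d(P,Q)$. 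Thus, as long as the working collection has more than $k$ clusters (so, by pigeonhole, two pure clusters lie in a common $C_i^*$), the pair chosen by \texttt{MergeClosest} is intra-cluster rather than inter-cluster, and since all $k$ true clusters are represented the loop halts precisely at $\{C_i^*\cap\Tilde{V}\}_{i\in[k]}$. This mirrors the $\gamma$-margin analysis of~\cite{ashtiani2016clustering}.

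With these clusters in hand, the remainder follows Theorem~\ref{thm:demo2}: each large $C_i^*\cap\Tilde{V}$ has $\Theta(\log n)$ nodes, so \texttt{CalculateThreshold} together with Lemmas~\ref{lem:gf},~\ref{lem:ic}, and~\ref{lem:eq} yields, with high probability, an estimated threshold within a multiplicative $[(1-\epsilon),(1+\epsilon)]$ of the true one, and a union bound over the $O(|F|\,|\Omega|)$ constraint--feature pairs preserves this. The final \texttt{Likelihood} step returns the constraint whose induced clustering best matches the pairwise intra/inter labels of $\Lambda$: the clustering produced under the true constraint is consistent with every demonstration (each $\lambda$ being a sample of it), so its accuracy tends to $1$, whereas a clustering induced by any other constraint disagrees on a constant fraction of sampled pairs in expectation and hence, by concentration, with high probability; so the returned $\arg\max$ is the true constraint.

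The main obstacle I expect is making the reconstruction in the key step fully rigorous. The exchange inequality above only constrains merges that involve a true cluster still possessing a same-cluster partner among the current pure clusters; it says nothing directly about an intra merge inside one true cluster competing against an inter merge between two \emph{already-completed} true clusters, and the \texttt{MergeClosest} loop stops as soon as $|\mathcal{C}|=k$. So, unless the margin property is read in its strongest (global) form --- every intra-cluster distance below every inter-cluster distance --- or one uses the balance assumption $|C_i^*|\ge n/k$ for all $i$ (which, via Lemma~\ref{lem:sample}, puts $\Theta(\log n)$ representatives of every cluster into $\Tilde{V}$ and prevents the merge budget from being exhausted before a large cluster is assembled), a large cluster could be split across several output clusters, after which a sub-sample of the already-random $\Tilde{V}$ need not be uniform and the threshold lemmas fail. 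Pinning down this sufficient separation/balance condition and verifying exact reconstruction of the $\Theta(n)$-size clusters under it is the real work; everything downstream is inherited from the earlier lemmas and from the proof of Theorem~\ref{thm:demo2}.
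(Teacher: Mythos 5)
Your proposal follows essentially the same route as the paper's own proof: invoke Lemma~\ref{lem:sample} to get $\Theta(\log n)$ samples from each large $C_i^*$, use the margin property to argue that \texttt{MergeClosest} only ever merges pure sub-clusters lying in a common $C_i^*$, and then inherit the threshold-estimation and likelihood arguments wholesale from Theorem~\ref{thm:demo2}. The obstacle you flag at the end (the exchange inequality $d(P,R)>d(P,Q)$ needs a shared anchor cluster, so it does not by itself rule out an inter-cluster merge between two already-completed true clusters) is a genuine subtlety, but the paper does not resolve it either---its proof simply restates the margin property in the strong global form (``all nodes that belong to the same cluster are closer to each other as compared to nodes of other clusters'') under which your argument closes---so your write-up is, if anything, more careful than the original.
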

\begin{proof}
Let $\Lambda$ denote a collection of demonstrations such that $|\Lambda|=\Theta(\log n)$ and let $\Tilde{V} = \cup_{\lambda\in \Lambda} \lambda$.  Using Lemma~\ref{lem:sample}, we know that $\Tilde{V} \cap C_i^* = \Theta(\log n)$  for all $C_i^*$ containing $\Theta(n)$ nodes. This guarantees that we have $\Theta(\log n)$ nodes sampled from $\mathcal{C}_i^*$ but we may not have merged all these nodes to form a single cluster. In order to show that the nodes present in merged cluster (after Line 5 of Algorithm~\ref{alg:likelihood}) belong to the same cluster, we use the margin property. The margin property assumes that all nodes that belong to same cluster are closer to each other as compared to nodes of other cluster. Therefore, \texttt{MergeClosest} always merges a pair of clusters that belong to same optimal cluster $C_i^*$, thereby guaranteeing its correctness. Since $C_i^*$ has been constructed correctly, the proof is same as Theorem~\ref{thm:demo2}.
\end{proof}

\begin{theorem}
	Given a collection of nodes $V$ and randomly chosen demonstrations $\Lambda = \Theta(\log n)$ such that each demonstration reveals the clustering over a subset of nodes, then Algorithm~\ref{alg:greedy2} recovers  ground truth clusters with a high probability if the nodes  $V$ satisfy the margin property.\label{thm:greedy2}
\end{theorem}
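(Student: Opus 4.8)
The plan is to reduce the statement to the machinery already developed for Theorem~\ref{thm:likelihood}, by showing that the first phase of Algorithm~\ref{alg:greedy2} reconstructs the ground-truth clustering $\mathcal{C}^*$ \emph{exactly}, after which the threshold-estimation, greedy-adjustment, and likelihood-selection steps return $\mathcal{C}^*$ unaltered. I would begin, exactly as in the proofs of Theorems~\ref{thm:demo2} and~\ref{thm:likelihood}, by setting $\Tilde{V} = \cup_{\lambda\in\Lambda}\lambda$ and invoking Lemma~\ref{lem:sample} to obtain $|\Tilde{V}\cap C_i^*| = \Theta(\log n)$ for every $C_i^*$ with $|C_i^*|\ge n/k$. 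This is the only place randomness enters the argument, and it guarantees that the demonstrations carry enough information about every heavy cluster to make the later likelihood comparison decisive.

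Next I would handle Phase~1. Algorithm~\ref{alg:greedy2} starts with all of $V$ as singletons, merges the nodes that co-occur in some $\lambda$ (these groupings are consistent with $\mathcal{C}^*$ by the ground-truth-sampling assumption), and then repeatedly applies \texttt{MergeClosest} under the linkage distance $d(C_1,C_2)=\max_{u\in C_1,v\in C_2}d(u,v)$ until $k$ clusters remain. Following the argument in Theorem~\ref{thm:likelihood}, the margin property on $V$ forces every merge to combine two current clusters that lie inside a common true cluster: whenever more than $k$ clusters remain, some $C_i^*$ is still split, the closest such intra-$C_i^*$ pair has distance at most the diameter of $C_i^*$, while by the margin any pair straddling two distinct true clusters is strictly farther apart, so the global minimum selected by \texttt{MergeClosest} is always intra-cluster. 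Since no merge ever crosses a true-cluster boundary and there are exactly $k$ true clusters, a pigeonhole argument shows the while-loop halts precisely when $\mathcal{C}=\mathcal{C}^*$.

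Finally, Phase~2 inherits correctness. Because \texttt{CalculateThreshold} is now evaluated on the exact clusters $\mathcal{C}^*$, it returns the true threshold for the intended pair $(\omega_F,f)$ --- here exactly, not merely within $[1-\epsilon,1+\epsilon]$, strengthening Lemmas~\ref{lem:gf}--\ref{lem:eq} --- and the greedy adjustment for $(\omega_F,f)$ is a no-op, since $\mathcal{C}^*$ already satisfies $\omega_F$ at that threshold; hence the candidate $C_{\omega_F,f}=\mathcal{C}^*$ attains perfect pairwise intra-/inter-cluster agreement with every demonstration in $\Lambda$. For any other $(\omega,f)$ the produced clustering either coincides with $\mathcal{C}^*$ or, by Lemma~\ref{lem:sample}, disagrees with $\Lambda$ on a heavy cluster and so has strictly smaller likelihood with high probability, so $\arg\max_{\omega,f}\mathcal{L}_{\omega,f}$ returns $\mathcal{C}^*$. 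I expect the main obstacle to be the Phase~1 analysis: making the max-linkage merging provably exact under the (per-anchor) margin property as stated --- both that it never merges across true clusters and that it stops exactly at $\mathcal{C}^*$ --- and, to a lesser extent, verifying carefully that the greedy post-processing genuinely leaves an already-feasible $\mathcal{C}^*$ untouched and that any residual ties in $\mathcal{L}$ are resolved consistently with the ground truth.
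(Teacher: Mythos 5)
Your proposal is correct and takes essentially the same route as the paper: the paper's entire proof of Theorem~\ref{thm:greedy2} is the single sentence ``This analysis is similar to that of Theorem~\ref{thm:likelihood},'' and your reduction via Lemma~\ref{lem:sample} plus the margin-property argument for \texttt{MergeClosest} is precisely the intended analysis. You in fact supply more detail than the paper does --- notably the pigeonhole argument that the merging halts exactly at $\mathcal{C}^*$ and the verification that the greedy post-processing and likelihood selection in Phase~2 of Algorithm~\ref{alg:greedy2} leave the recovered clustering unchanged --- and the obstacle you flag (that the per-anchor margin property only directly controls linkages sharing a common sub-cluster) is a gap the paper's own argument glosses over as well.
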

\begin{proof}
	This analysis is similar to that of Theorem~\ref{thm:likelihood}.
\end{proof}

\noindent \textbf{Discussion.}
The analysis of Theorem~\ref{thm:likelihood} assumed margin property over the sampled nodes. In most real world datasets, clusters are often well separated, thereby automatically implying the margin property. Additionally, even if the margin property does not hold on overall clusters, expert can choose samples for the demonstration such that the samples of different clusters are present sufficiently away. The proof of Theorem~\ref{thm:likelihood} can be extended to settings where a constant fraction of sampled nodes do not obey the margin property.

Another important assumption that is crucial in the analysis presented above is the randomness of sampled nodes. Theorem~\ref{thm:demo2} and \ref{thm:likelihood} assume that every node is chosen uniformly at random. Note that these assumptions can be relaxed and our proofs extend to settings when the \emph{samples are biased towards a specific cluster}. For example, the number of samples a specific cluster (say $C_i^*$) is much higher than $\Theta(\log n)$ but the samples from other clusters are much fewer. In this case, Algorithm~\ref{alg:likelihood}  will correctly estimate the threshold from $C_i^*$ with fewer demonstrations but it may require more number of demonstrations to achieve accurate estimate from other clusters. 

\begin{figure*}
\subfigure[\texttt{Bank}, $\omega_{GF}$]{\includegraphics[width=0.24\textwidth]{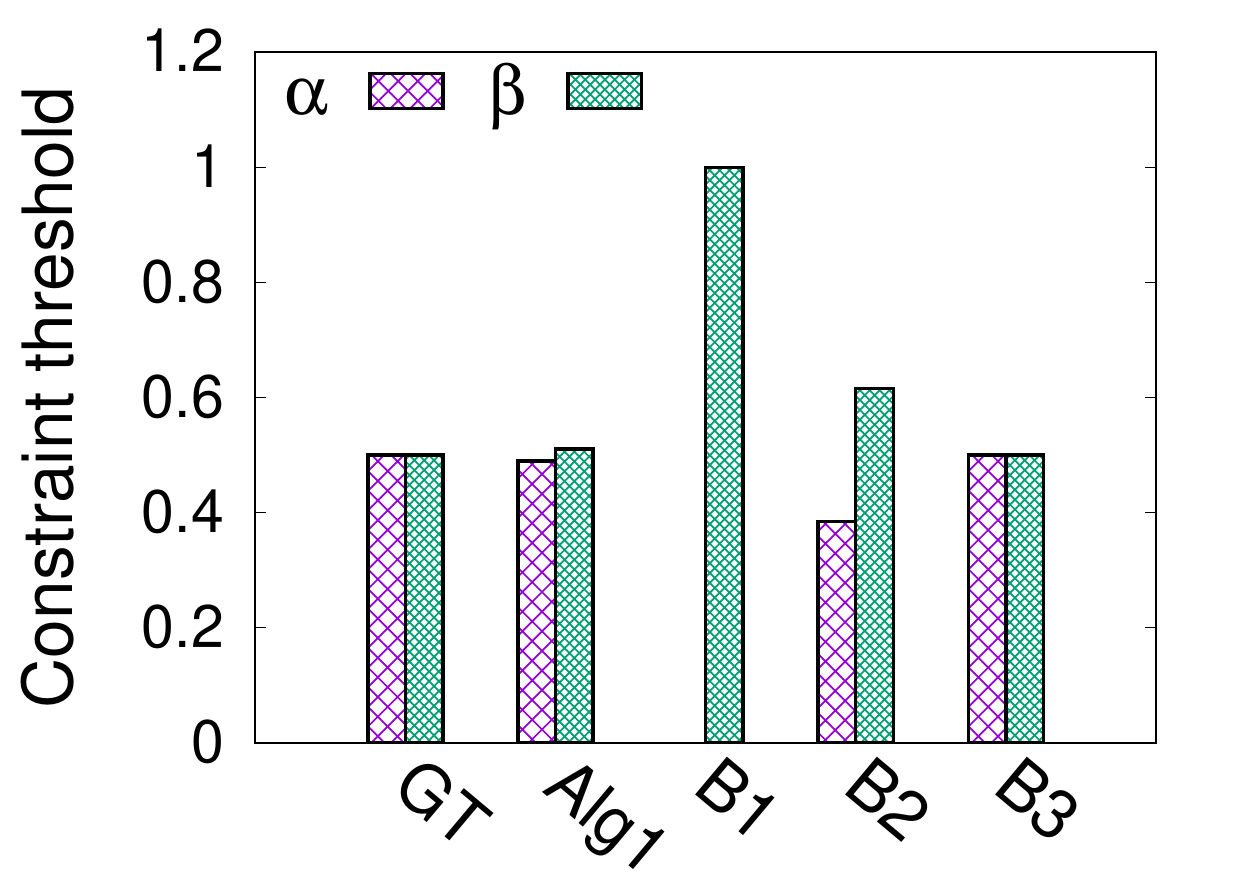}\label{constraint-bank}}
\subfigure[\texttt{Adult}, $\omega_{EQ}$]{\includegraphics[width=0.24\textwidth]{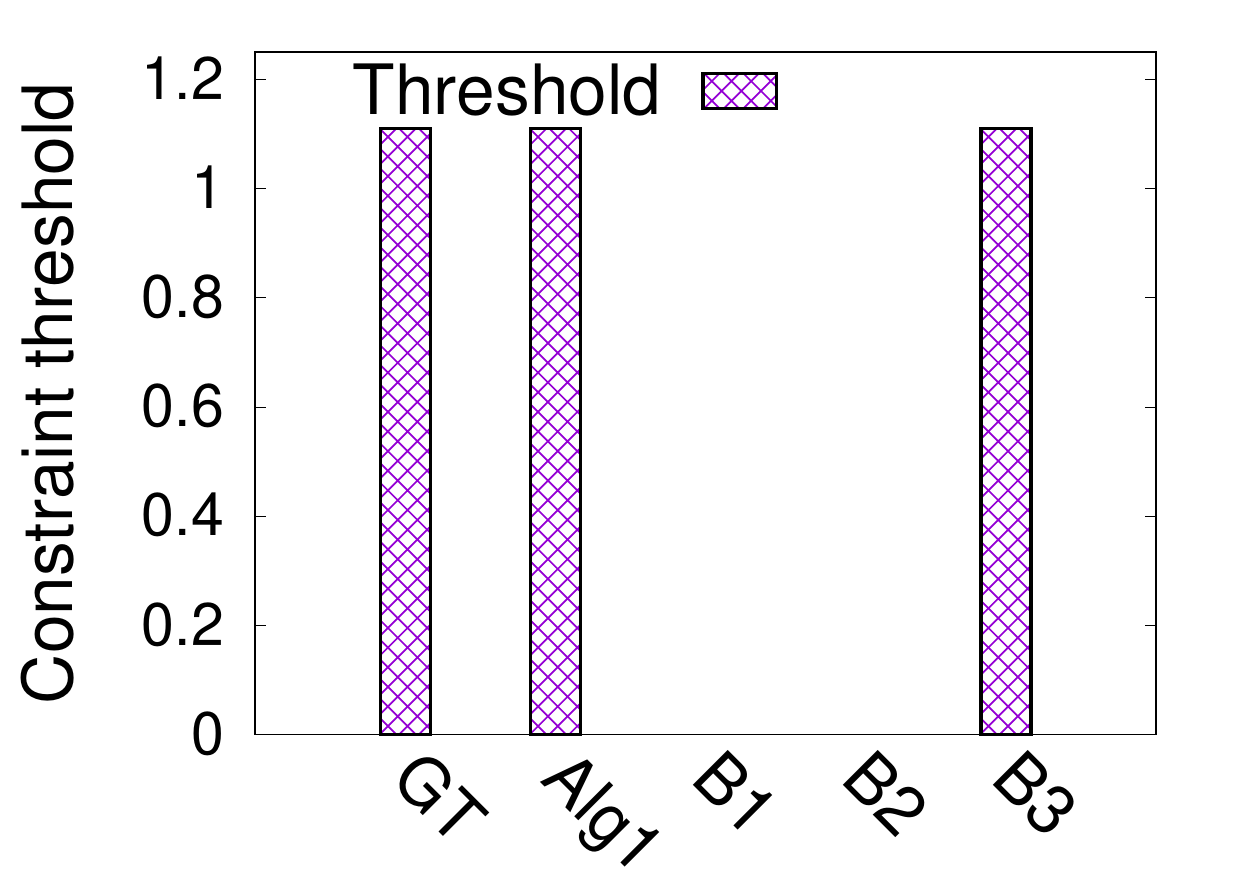}\label{constraint-tf}}
\subfigure[\texttt{Crime}, $\omega_{IC}$]{\includegraphics[width=0.24\textwidth]{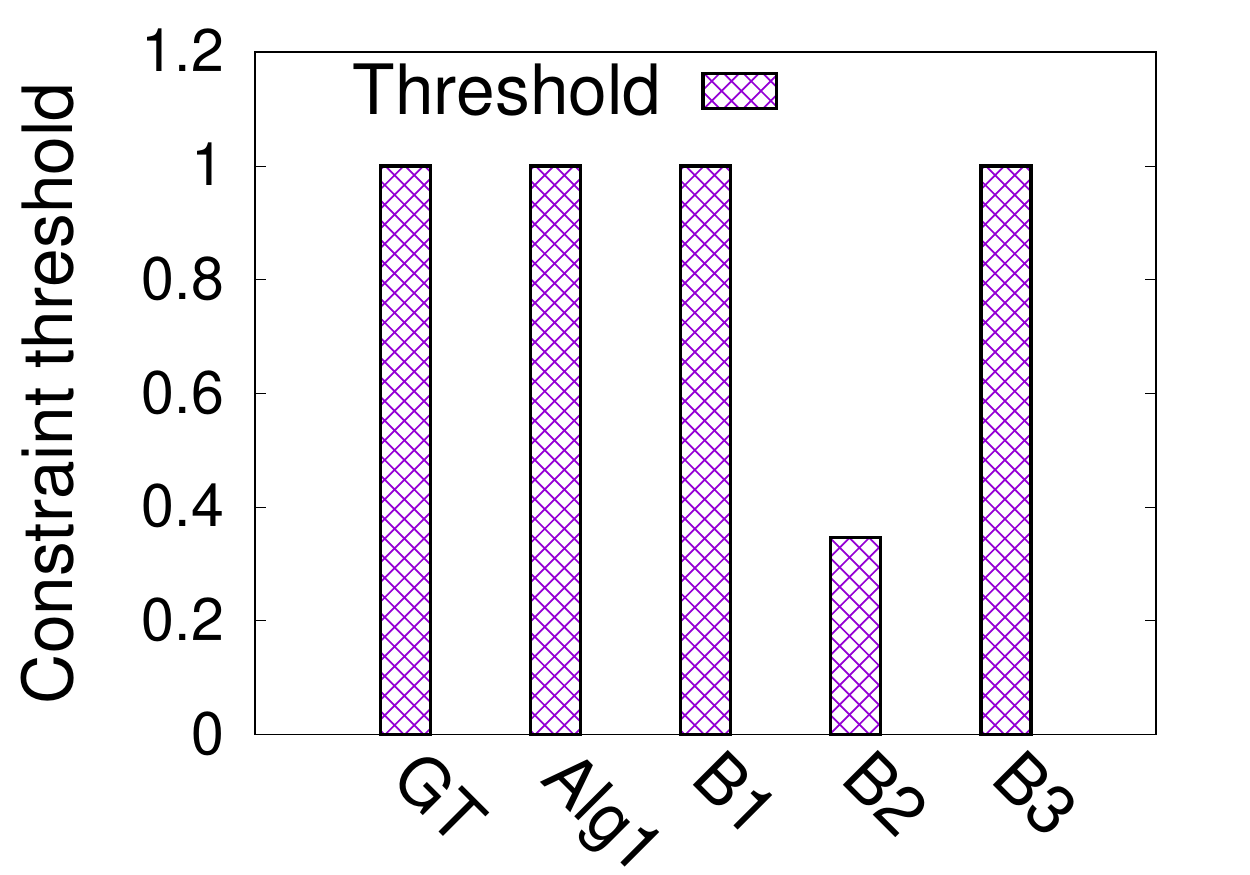}\label{constraint-crime1}}
\subfigure[\texttt{Adult}, $\omega_{IC}$]{\includegraphics[width=0.24\textwidth]{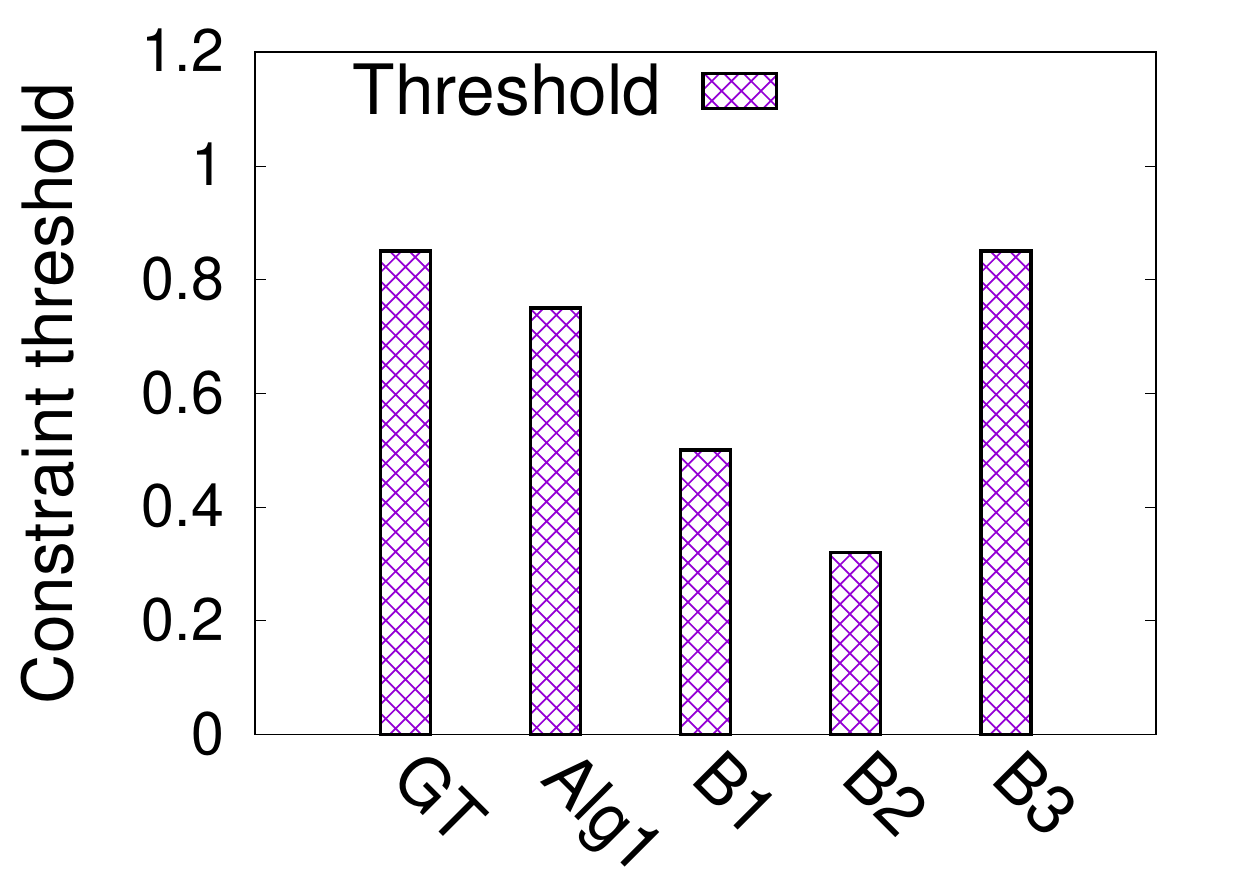}\label{constraint-adult2}}
\vspace{-4pt}
\caption{Comparison of estimated constraints for different datasets.}
\label{fig:likelihood}
\end{figure*}

\section{Experiment Setup} \label{sec:exp}
In this section, we evaluate the effectiveness of \sys{} on three real world datasets. We show that our techniques efficiently calculate the true likelihood of each constraint and the generated set of clusters are closer to the desired output, compared to other baselines.

\paragraph{\textbf{Datasets}} We evaluate our approach on three datasets, which are borrowed from the prior work that experiment with the metrics of interest.

\begin{itemize}
\item \textbf{Bank} dataset~\cite{bera2019fair} containing 4521 data nodes corresponding to phone calls from a marketing campaign by a Portuguese banking institution. The marital status of the records is considered as the sensitive feature for $\omega_{GF}$ constraint, with parameters $ [0.49,0.51]$.
\item \textbf{Adult} dataset~\cite{saisubramanian2020balancing} containing  $1000$ records with the income information of individuals along with their demographic attributes. `Age', `occupation', and `income' features are considered as the features of interest. Fairness constraint $\omega_{EQ}$ is optimized with respect to `occupation' and $\omega_{IC}$ with respect to `age' and 'income'.

\item \textbf{Crime} dataset~\cite{saisubramanian2020balancing} contains crime information about different $1994$ communities in the United States, where `number of crimes per 100K population' is used for $\omega_{IC}$ fairness constraint.
\end{itemize}
The features in these datasets are considered to calculate distance between every pair of nodes. Euclidean distance is calculated between numerical attributes and Jaccard distance between the categorical attributes. Please refer to \cite{bera2019fair,saisubramanian2020balancing} for more details.

\paragraph{\textbf{Baselines}}
We compare the results of our techniques with the following baselines: 
\begin{itemize}
	\item \texttt{B1} calculates the likelihood by considering each demonstration as a separate set of clusters;
	\item \texttt{B2} merges the different clusters in the demonstration to identify $k$ clusters and infers the likelihood over the identified clusters; and
	\item \texttt{B3} performs a grid search over all possible fairness constraints and identifies the clustering that conforms with the generated demonstrations.
\end{itemize}
Algorithm~\ref{alg:likelihood} is referred as \texttt{Alg1} and Algorithm~\ref{alg:greedy2} is labeled \texttt{Alg2} in all the plots in this section. Unconstrained k-center clustering technique is labeled as \texttt{kC}.

\paragraph{\textbf{Setup}} We use open source implementations of $\omega_{GF}$ and $\omega_{IC}$, and contacted the authors of \cite{galhotra2019lexicographically} for $\omega_{EQ}$. Their code base were used to generate ground truth clusters for an input constraint requirement. All algorithms were implemented in Python and tested on an Intel Core i5 computer with 16GB of RAM.  

Our experiments compare the identified fairness parameter by our algorithm and each baseline. To compare the quality of identified clusters, we compute the F-score of the identified intra-cluster pairs of nodes. F-score denotes the harmonic mean of the precision and recall, where precision refers to the fraction of correctly identified intra-cluster pairs and recall refers to the fraction of intra-cluster pairs that are identified by our algorithm. In all experiments, we report results with $k=5$. We execute the code of constraint clustering techniques with specified parameters to generate ground truth clustering. Each demonstration is generated by sampling a subset of \emph{five} nodes randomly from these clusters. Unless otherwise specified, we consider $2\log n$ demonstrations as input and these demonstrations do not reveal the true cluster affiliation of the considered nodes. In case there are multiple constraints that generate the same set of demonstrations, the algorithm output is considered correct if it correctly identifies any one of those constraints\footnote{Among the considered constraints, this situation does not arise whenever $|\Lambda|>5$}.

\section{Results and Discussion}

\paragraph{Effectiveness of Algorithm~\ref{alg:likelihood}}
The effectiveness of Algorithm~\ref{alg:likelihood} is measured based on the constraint threshold and the quality of the generated clusters.  Figure~\ref{fig:likelihood} compares the estimated threshold of the most-likely constraint, calculated by Algorithm~\ref{alg:likelihood} with  the ground truth and other baselines. Across all datasets, Algorithm~\ref{alg:likelihood} estimates the optimal threshold for every considered constraint, matching the performance of ground truth. This validates the effectiveness of Algorithm~\ref{alg:likelihood} to correctly estimate the most likely constraint and its corresponding threshold.

Among the baselines, \texttt{B3} achieves a similar performance. This is an expected behavior since \texttt{B3} performs naive grid search to explore all threshold values. Although it is effective in inferring the threshold, this technique is orders of magnitude inefficient due to the exhaustive enumeration of clusters using the different sets of constraints, features and their respective thresholds. It is therefore practically infeasible to implement this for problems with large input graphs and large $\Omega$. 

The other baselines \texttt{B1} and \texttt{B2} consistently show poor performance. Baseline \texttt{B1} does not identify any fairness constraint in settings where the demonstrations obey $\omega_{GF}$ and $\omega_{EQ}$ (Figure~\ref{constraint-bank} and \ref{constraint-tf} respectively). However, it identifies the optimal clustering constraint only in case of $\omega_{IC}$. 
Given that each demonstration has fewer than $5$ nodes, the information available in a single demonstration is not sufficient for \texttt{B1} to infer the true fairness constraint. On the other hand, \texttt{B2} overcomes the limitations of \texttt{B1} by merging the demonstrations randomly in order to capture constraint information over all demonstrations collectively. This approach has better performance than \texttt{B1} but does not identify the true clustering constraint in majority of the cases. It does not identify the fairness constraint $\omega_{EQ}$ (Figure~\ref{constraint-tf}) and the identified constraint threshold in all other cases are sub-optimal.

Figure~\ref{fig:accuracy} compares the quality of the returned clusters, by comparing the F-score of the clustering output of each technique with the ground truth clusters. In this experiment, Algorithm~\ref{alg:likelihood}  and \texttt{B3} achieve optimal performance as they identify the true ground truth clusters across all parameter settings. All other baselines did not identify the clusters correctly and achieved low F-score. Particularly, in case of $\omega_{EQ}$ and $\omega_{GF}$, the baselines \texttt{B1} and \texttt{B2} did not identify the optimal constraint threshold and generated  biased clusters.

\begin{figure}
\subfigure[\texttt{Adult} $\omega_{IC}$, $\beta=1$]{\includegraphics[width=0.23\textwidth]{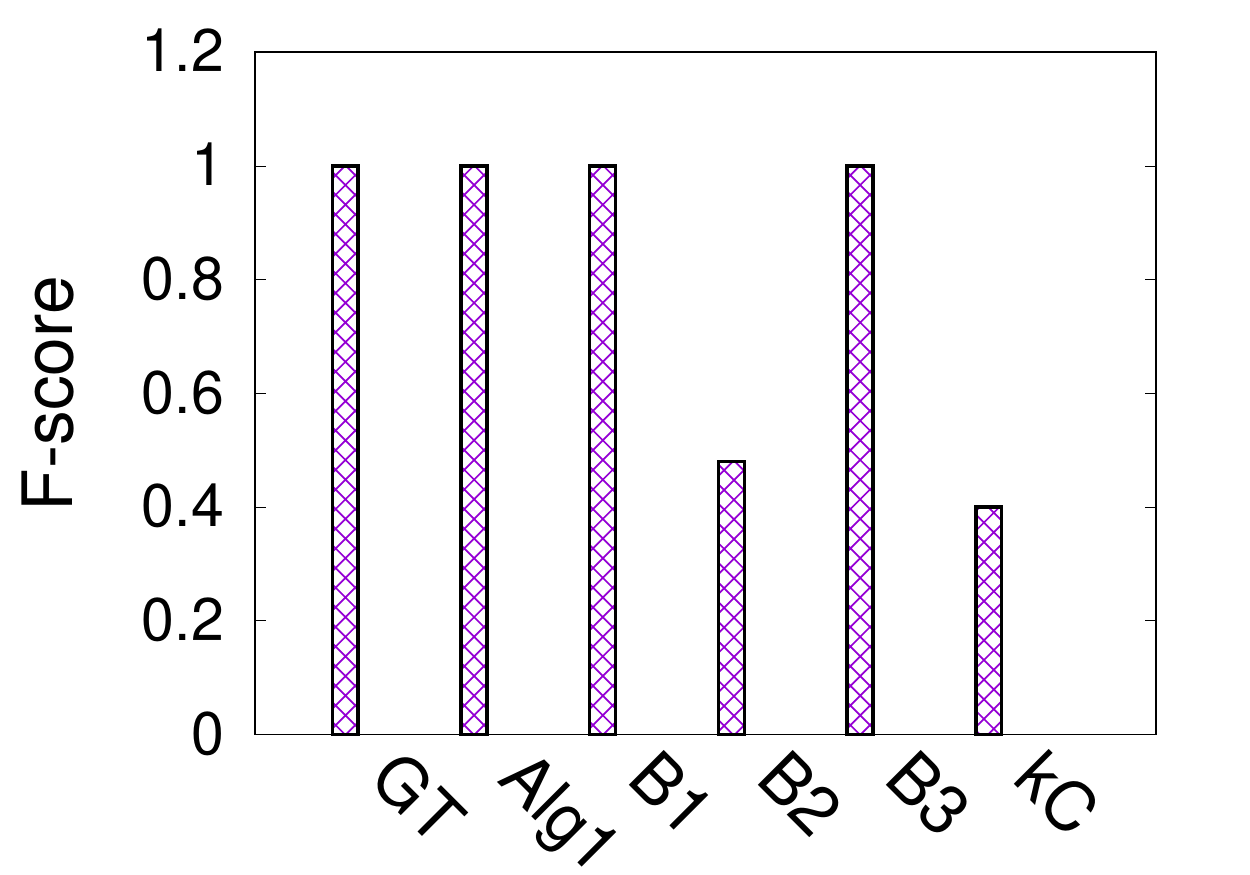}}
\subfigure[\texttt{Adult} $\omega_{IC}$, $\beta=0.9$]{\includegraphics[width=0.23\textwidth]{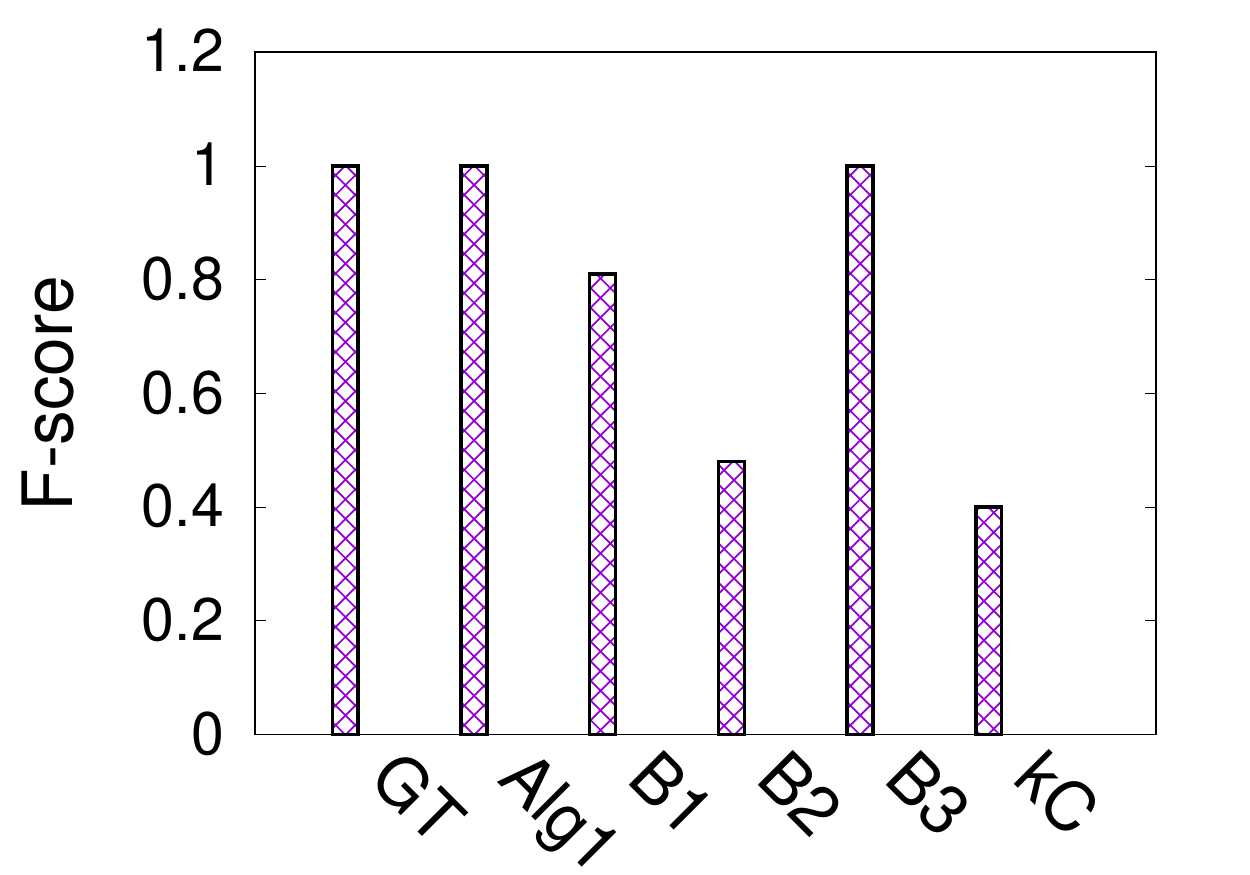}}
\vspace{-4pt}
\caption{F-score comparison for different datasets.}
\label{fig:accuracy}
\vspace{-4pt}
\end{figure}

\begin{table}
	\begin{center}
		\begin{tabular}{ |c|c|c|c|c| } 
			\hline
			Dataset & \texttt{Alg1} & \texttt{B1}& \texttt{B2}& \texttt{B3} \\ 
			\hline\hline
			\texttt{Bank} & 0.57  & 0.49 & 0.52 & 100 \\ \hline
			\texttt{Adult} & 1.14 & 1.01& 1.1 & 117\\
			\hline
			\texttt{Crime}  &1.02& 0.9 & 0.97& 104\\ 
			\hline
		\end{tabular}
	\end{center}
	\vspace{-4pt}
	\caption{Running time results (in minutes). \label{tab:efficiency}}
	\vspace{-4pt}
\end{table}

Table~\ref{tab:efficiency} compares the running time of \texttt{Alg1} and other baselines for different datasets and clustering constraints.  Among all datasets, \texttt{Alg1} is orders of magnitude faster than \texttt{B3}. In the worst case, \texttt{Alg1} generates $O(|\Omega|\times |F|)$ sets of clusters whereas \texttt{B3} generates clusters exhaustively for every value of constraint threshold.
The running time of \texttt{Alg1} is comparable with \texttt{B1} and \texttt{B2}.

\paragraph{Effectiveness of Algorithm~\ref{alg:greedy2}} \texttt{Alg2} identifies $k$ clusters such that the returned output obeys the fairness constraint reflected from the demonstrations $\Lambda$.   Figure~\ref{fig:demonstration-alg2} plots the F-score of \texttt{Alg2} for two data sets and the results are compared with that of \texttt{Alg1}. This allows us to compare the performance of our greedy \texttt{Alg2} with that of an existing efficient solver. In Figure~\ref{alg2-bank}, we employed the approach used in \citet{bera2019fair} to generate the ground truth clusters according to $\omega_{GF}$ and tested the effectiveness of \texttt{Alg2} to recover ground truth clusters for varying number of demonstrations. Similarly in Figure~\ref{alg2-adult}, ground truth is generated using $\omega_{IC}$.

When the number of demonstrations is less than $5$, the F-score of the generated clusters is $0.55$ for both domains. As we increase the number of demonstrations, we observe that the performance of \texttt{Alg2} improves and is closer to that of \texttt{Alg1}. \texttt{Alg2} achieves more than $0.9$ F-score in less than $20$ demonstrations. The continuous improvement in accuracy demonstrates the effectiveness of \texttt{Alg2} in recovering clusters without relying on a clustering algorithm. 
\begin{figure}
\subfigure[\texttt{Bank}, $\omega_{GF}$]{\includegraphics[width=0.23\textwidth]{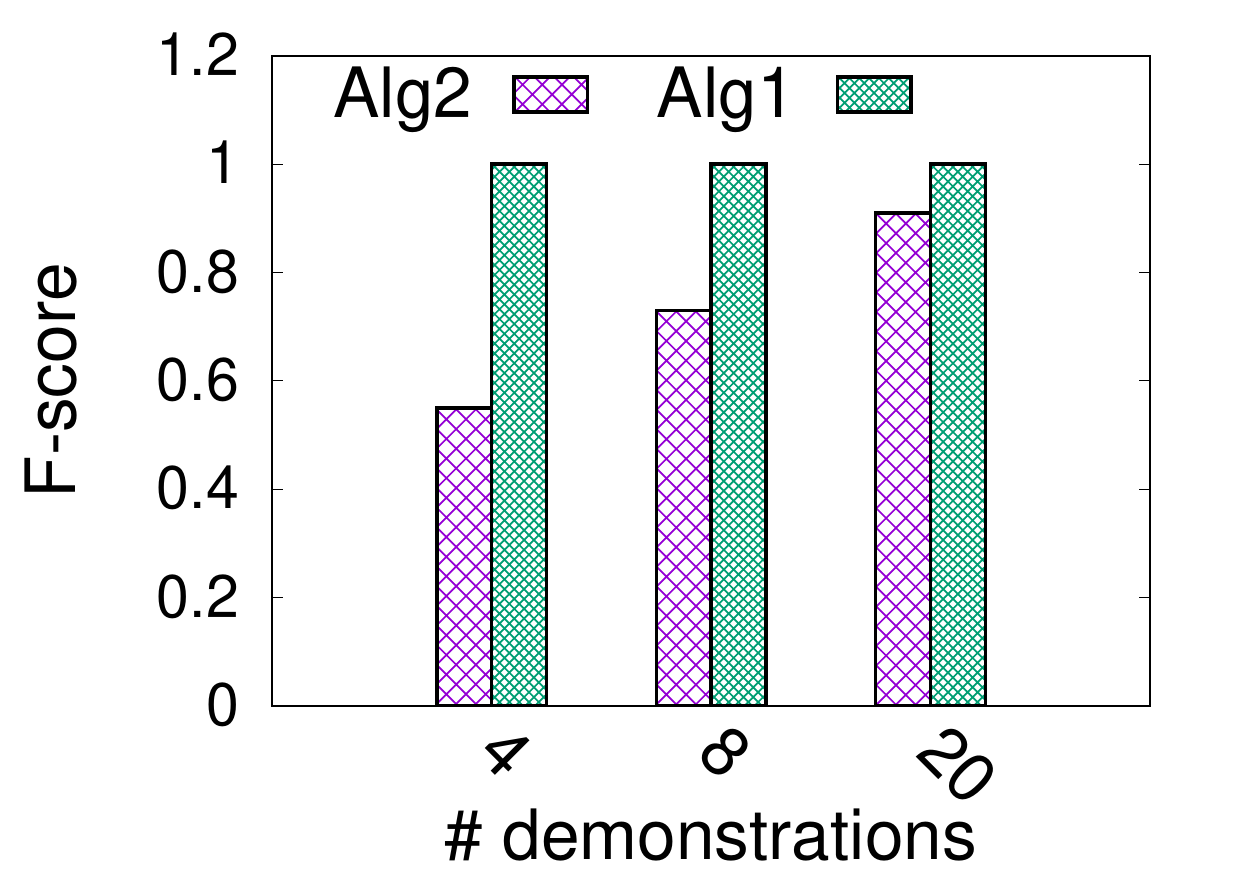}\label{alg2-bank}}
\subfigure[\texttt{Adult},  $\omega_{IC}$]{\includegraphics[width=0.23\textwidth]{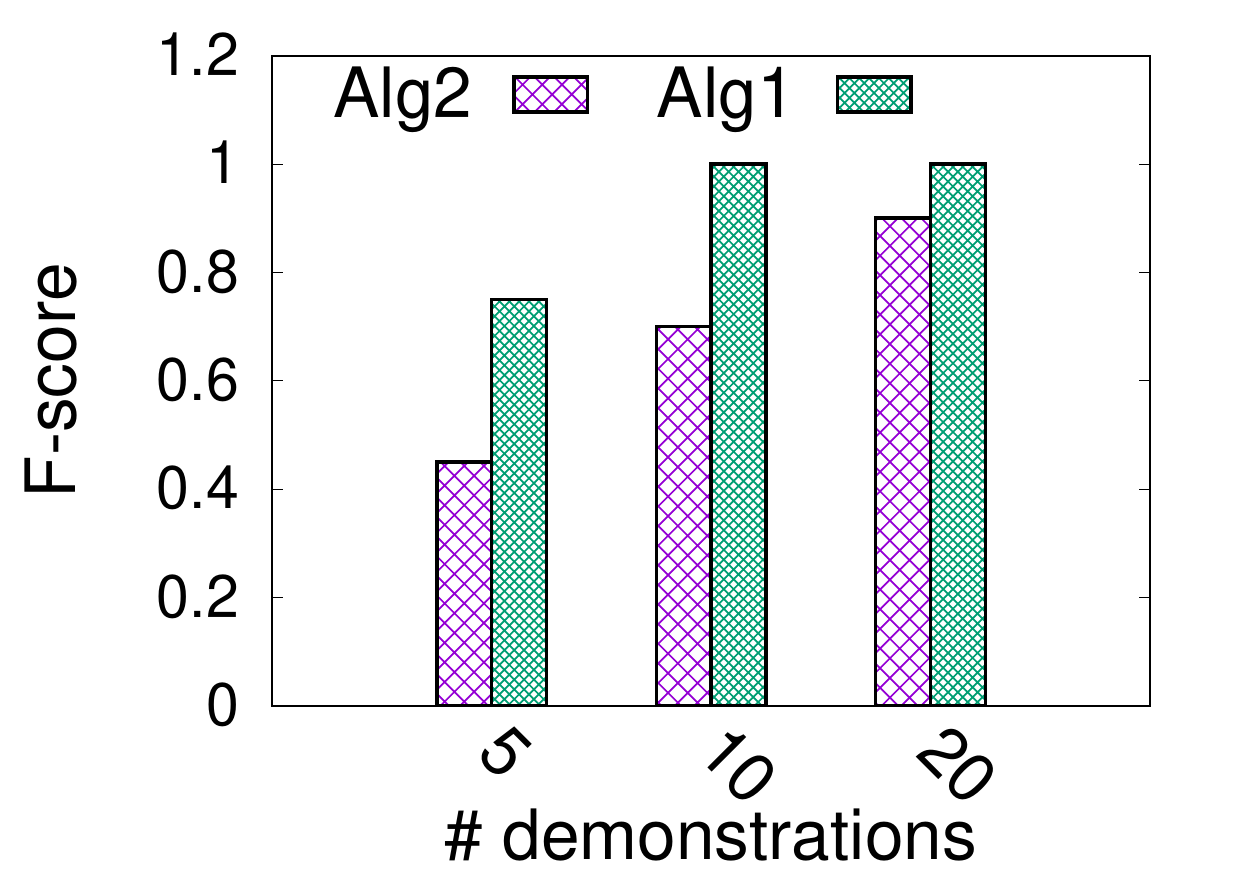}\label{alg2-adult}}
\vspace{-4pt}
\caption{Effect of \# demonstrations on \texttt{Alg2} performance.}
\vspace{-4pt}
\label{fig:demonstration-alg2}
\end{figure}

\paragraph{Effect of Demonstrations} 
We now investigate the effect of number of demonstrations on the performance of our techniques in identifying the optimal constraint threshold.  We varied the number of demonstrations in multiples of $\log n$: $0.5\log n, \log n, 2\log n$.
Figure~\ref{fig:demonstration} compares the constraint threshold and the F-score of the identified clusters using \texttt{Alg1}, with varying number of demonstrations on the \texttt{Bank} and \texttt{Adult} dataset. In case of $\omega_{GF}$, the ground truth constraint requires equal representation of the different groups in each cluster. Algorithm~\ref{alg:likelihood} correctly identifies the fairness constraint and achieves perfect F-score even when $\Lambda$ contains as low as four demonstrations. Increasing the number of demonstrations does not improve its performance as the constraint likelihood has already converged. In $\omega_{IC}$, the ground truth clusters are generated according to threshold $\beta = 0.85$. When the number of input demonstrations $|\Lambda|$ is low ($|\Lambda|=4$), the estimated interpretability constraint threshold is inaccurate and the constraint estimation improves as the number of demonstrations are increased. Algorithm~\ref{alg:likelihood} is able to achieve an F-score more than $0.8$ with just ten demonstrations and the quality of final clusters improves monotonically with increasing demonstrations. It converges to the accurate constraint threshold whenever $|\Lambda|\geq 20$ and therefore achieves perfect F-score. 

In Figure~\ref{fig:likelihood}, the input demonstrations do not reveal the true cluster affiliation of any of the nodes. We ran an additional experiment with the globally informative demonstrations (Definition~\ref{defn:global}), which reveals the ground truth cluster affiliation of each node in the demonstration. With this additional information, we observe that Algorithm~\ref{alg:likelihood} converges to the optimal constraint threshold in less than ten demonstrations. This experiment validates that Algorithm~\ref{alg:likelihood} is able to leverage the extra information provided by globally informative demonstration to converge faster. 

\begin{figure}
\subfigure[\texttt{Bank}, $\omega_{GF}$]{\includegraphics[width=0.23\textwidth]{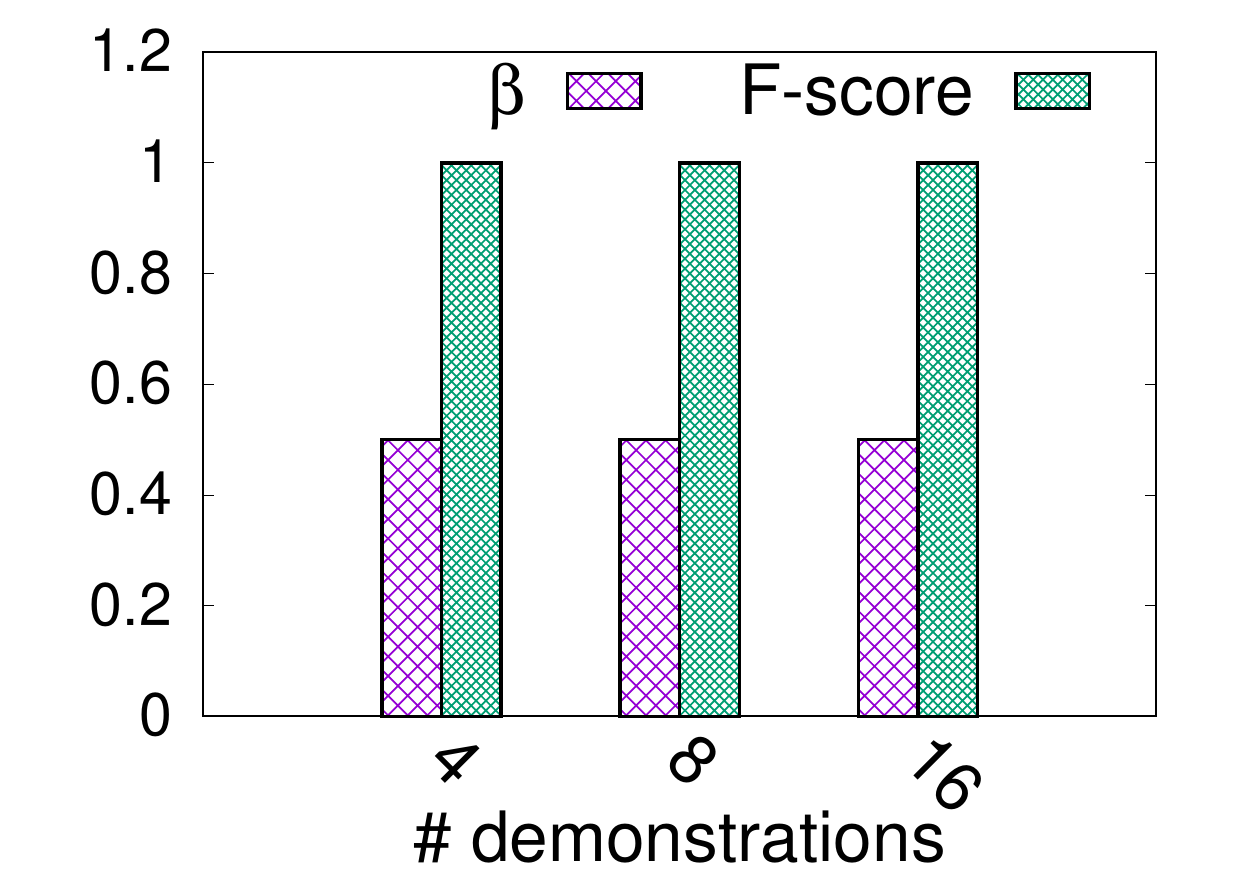}}
\subfigure[\texttt{Adult}, $\omega_{IC}$ ]{\includegraphics[width=0.23\textwidth]{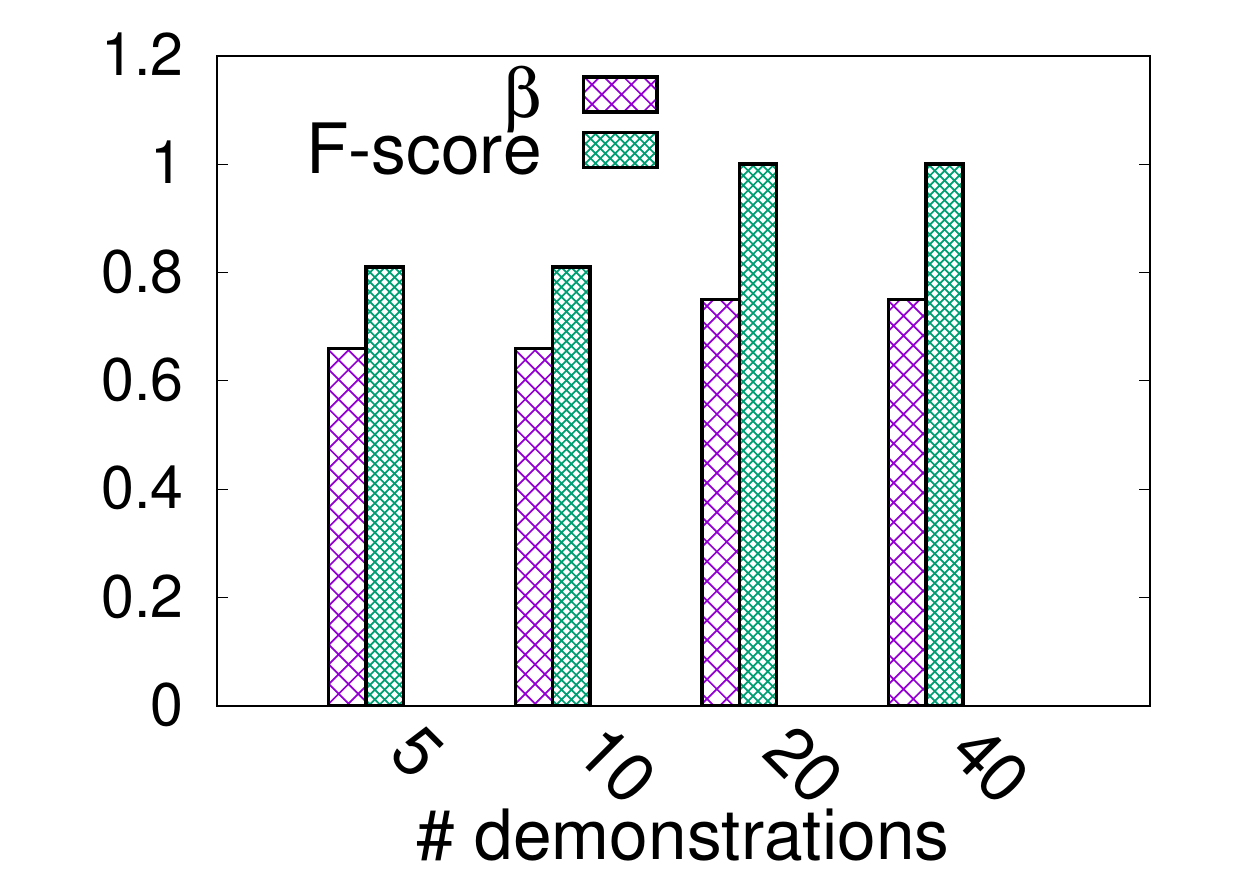}}
\vspace{-4pt}
\caption{Effect of \# demonstrations on \texttt{Alg1} performance.}
\vspace{-4pt}
\label{fig:demonstration}
\end{figure}

\begin{figure}
\subfigure[\texttt{Bank}, $\omega_{GF}$]{\includegraphics[width=0.23\textwidth]{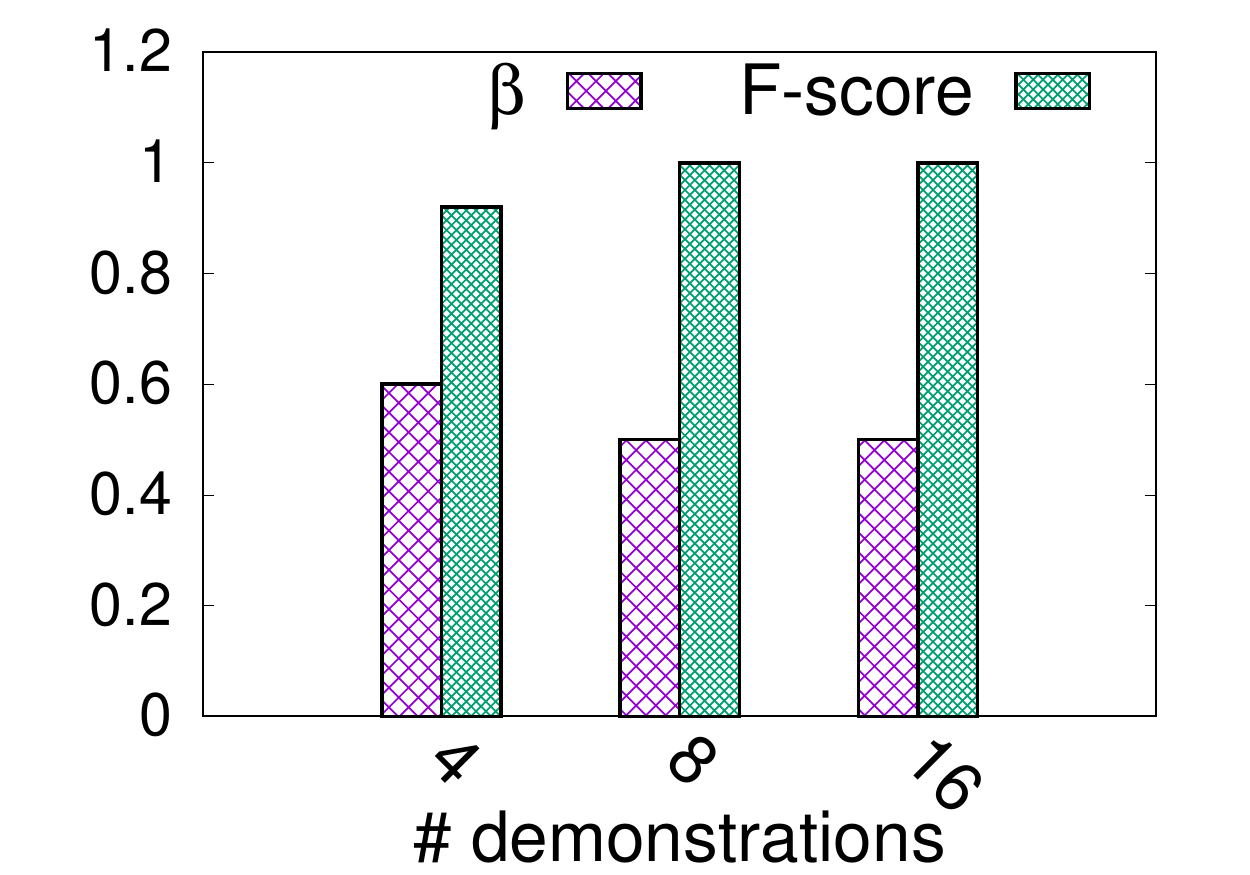}}
\subfigure[\texttt{Adult}, $\omega_{IC}$ ]{\includegraphics[width=0.23\textwidth]{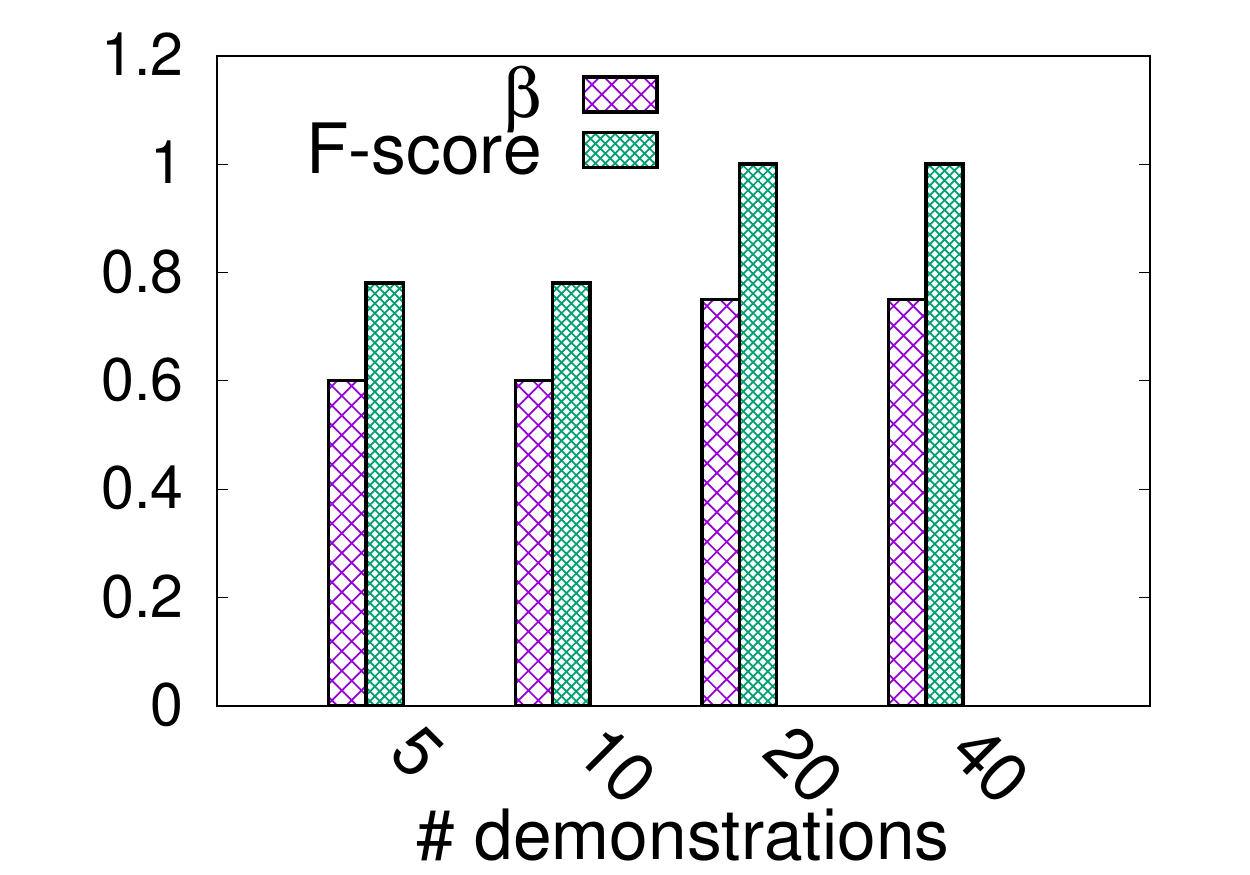}}
\vspace{-4pt}
\caption{Effect of sampling bias on \texttt{Alg1} performance.}
\vspace{-4pt}
\label{fig:biased}
\end{figure}

Next, we evaluate  the effect of number of demonstrations on the performance of Algorithm~\ref{alg:greedy2}. Figure~\ref{fig:demonstration-alg2} shows that as we increase the number of demonstrations, \texttt{Alg2} matches the F-score of \texttt{Alg1}.

\paragraph{Ablation Study} To test the effectiveness of our constraint estimation techniques, we varied the size of demonstration from $4$ to $10$ for the different constraints. As expected, the number of required demonstrations reduces linearly with increase in demo size\footnote{We do not consider smaller demonstrations because clustering fewer than $4$ nodes do not reveal information about the underlying clusters.}.  Therefore, an increase in demonstration size helps \texttt{Alg1} converge faster.

We tested the robustness of our constraint threshold estimation techniques by generating demonstrations according to a biased distribution. In the first experiment (Figure~\ref{fig:biased}), we employed a \emph{biased} sampling procedure, where each demonstration is biased in favor of some specific clusters but all nodes within those chosen clusters are equally likely to be chosen for the demonstration. Specifically, we follow a two step procedure where we first sample the cluster $C_i$ with probability $p_i $ and the nodes from the sampled cluster are chosen randomly. This introduction of bias did not affect the quality of our techniques and \texttt{Alg1} was able to recover ground truth clusters in $\Theta(\log n)$ demonstrations. The second experiment considered a biased sampling procedure where the expert samples fewer nodes from the marginalized groups. For example, a node having `red' color is chosen with probability $\frac{1}{n}$  but a blue colored node is chosen with probability $\frac{4}{n}$. In such setting, the returned demonstrations are biased against the marginalized groups and the inferred clustering threshold is not accurate. We observe that this bias translates into the constraint threshold estimation procedure of \texttt{Alg1}. This experiment justifies the requirement of an unbiased expert annotator that chooses nodes randomly, without considering their sensitive attributes.

To further study the effect of $k$, we vary the number of clusters as $k=\{5,10,15,20,50\}$ for \texttt{adult} dataset and calculated the number of demonstrations required to identify the true clustering constraint. For all values of $k$, \texttt{Alg1} identified the optimal set of clusters in less $20$ ($2\log n$) demonstrations and the number of required demonstrations increases sub-linearly with $k$.  For example, it required $20$ demonstrations for $k=5$ and $60$ demonstrations were enough for $k=50$. This increase in number of demonstrations is justified because \texttt{Alg1} tries to merge presented demonstrations into $k$ clusters. If the number of clusters in presented demonstrations is smaller than $k$, then it might end up partitioning some clusters which may introduce some noise in the likelihood estimation procedure. However, when the input demonstrations are globally informative, the number of required demonstrations do not increase with $k$ and therefore do not include the plots. \texttt{Alg1} converges to the optimal clustering constraint as soon as there are $\Theta(\log n)$ nodes from any of the clusters. 

\begin{figure}
	\centering
{\includegraphics[scale=0.5]{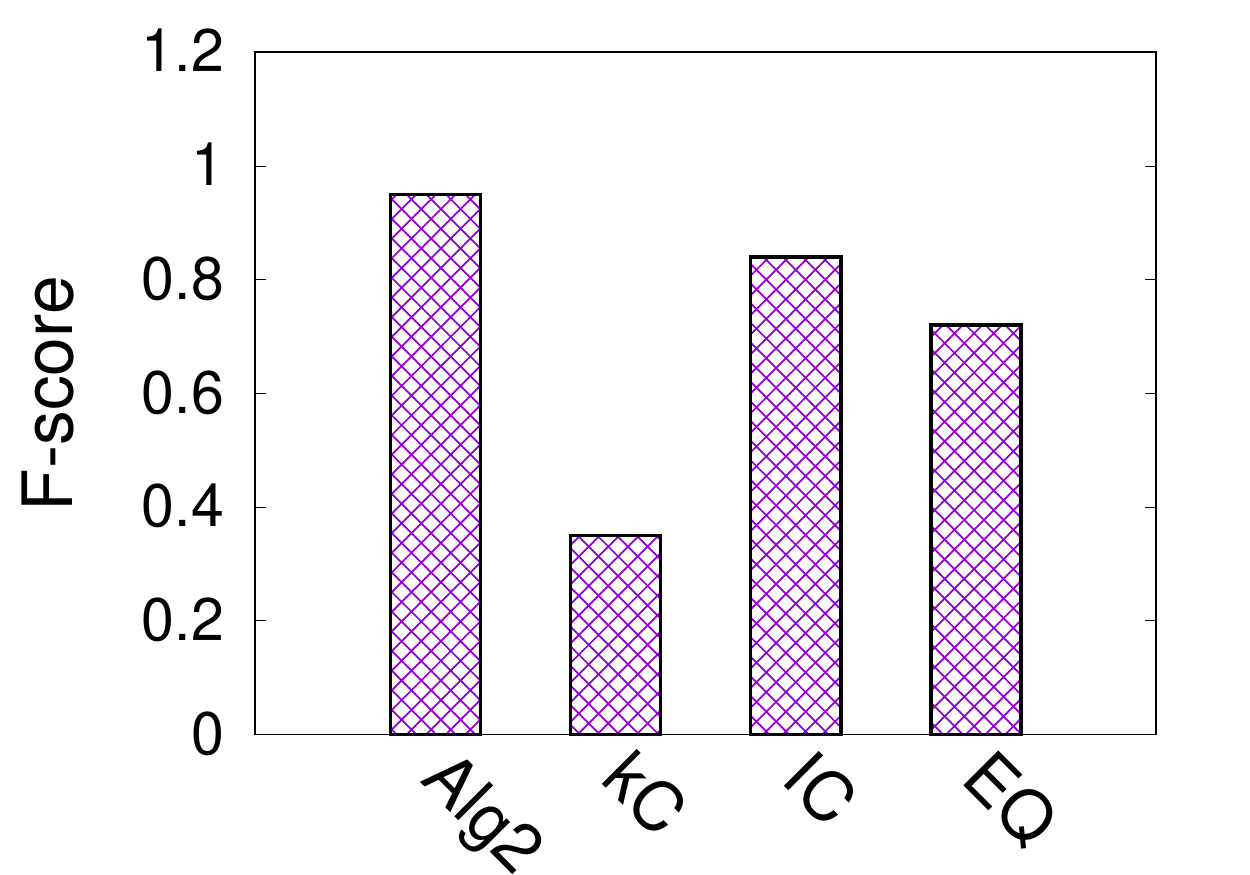}}
\caption{F-score of fair and interpretable clusters generated by different techniques.}
\label{fig:iceq}
\vspace{-4pt}
\end{figure}

\paragraph{Fair and Interpretable Clusters} To further evaluate the effectiveness of generating fair and interpretable clusters, we ran interpretable clustering algorithm~\cite{saisubramanian2020balancing} with $\beta\!=\!1$ for \texttt{Adult} dataset. The generated clusters were then post-processed to satisfy $\omega_{EQ}$. Since none of the current clustering algorithms optimize for fairness and interpretability, we implemented a greedy technique to process the output of interpretable clusters and satisfy fairness constraint. We considered this output as the ground truth to generate globally informative demonstration $\Lambda$ and ran \texttt{Alg2} to calculate the set of clusters with maximum likelihood. \texttt{Alg2} achieved F-score of more than $0.9$ (Figure~\ref{fig:iceq}) with less than $25$ demonstrations, each with $5$ nodes. Any baseline that optimizes $\omega_{IC}$ or $\omega_{EQ}$ alone achieve sub-optimal performance. This experiment demonstrated the ability of \texttt{Alg2} to generate clusters even when the constraint optimization algorithm is not known. Additionally, \texttt{Alg2} requires the expert to label less than $25\%$ dataset to generate fair and interpretable clusters.

\section{Summary and Future Work}
With the availability of many nuanced fairness definitions, it is non-trivial to specify a fairness metric that captures what we intend. As a result, systems may be deployed with an incomplete specification of the fairness metric, which leads to biased outcomes. We formalize the problem of inferring the complete specification of the fairness metric that the designer intends to optimize for a given problem. We present an algorithm to generate fair clusters by inferring the fairness constraint using expert demonstration and analyze its theoretical guarantees. We also present a greedy approach to generate fair clusters for objectives which are not currently supported by the existing suite of fair clustering algorithms. To the best of our knowledge, our algorithm is the first to combine graph clustering and learning from demonstrations, particularly to improve fairness. We empirically demonstrate the effectiveness of our approach in inferring fairness and interpretability metrics, and then generate clusters that are fair and interpretable. Although we discuss the framework in the context of fair clustering, our proposed framework can be used to infer any clustering constraints, as shown in the experiments. 

In the future, we plan to conduct a human subjects study to evaluate our approach and design robust algorithms to infer the intended metrics in the presence of noise. Developing robust techniques to handle bias in demonstrations is another interesting question for future work. Extending our algorithm to handle other fairness metrics and interpretability metrics will broaden the scope of problems that can be handled by our approach.

\clearpage
\bibliographystyle{aaai21}
\bibliography{LCDReferences}

\begin{thebibliography}{37}
\providecommand{\natexlab}[1]{#1}
\providecommand{\url}[1]{\texttt{#1}}
\providecommand{\urlprefix}{URL }
\expandafter\ifx\csname urlstyle\endcsname\relax
  \providecommand{\doi}[1]{doi:\discretionary{}{}{}#1}\else
  \providecommand{\doi}{doi:\discretionary{}{}{}\begingroup
  \urlstyle{rm}\Url}\fi

\bibitem[{Abbeel and Ng(2004)}]{abbeel2004apprenticeship}
Abbeel, P.; and Ng, A.~Y. 2004.
\newblock Apprenticeship learning via inverse reinforcement learning.
\newblock In \emph{Proceedings of the 21st International Conference on Machine
  Learning}.

\bibitem[{Ahmadian et~al.(2019)Ahmadian, Epasto, Kumar, and
  Mahdian}]{ahmadian2019clustering}
Ahmadian, S.; Epasto, A.; Kumar, R.; and Mahdian, M. 2019.
\newblock Clustering without over-representation.
\newblock In \emph{Proceedings of the 25th ACM SIGKDD International Conference
  on Knowledge Discovery \& Data Mining}.

\bibitem[{Aljrees et~al.(2016)Aljrees, Shi, Windridge, and
  Wong}]{aljrees2016criminal}
Aljrees, T.; Shi, D.; Windridge, D.; and Wong, W. 2016.
\newblock Criminal Pattern Identification Based on Modified K-means Clustering.
\newblock In \emph{IEEE International Conference on Machine Learning and
  Cybernetics}, volume~2, 799--806.

\bibitem[{Anderson et~al.(2020)Anderson, Bera, Das, and
  Liu}]{anderson2020distributional}
Anderson, N.; Bera, S.~K.; Das, S.; and Liu, Y. 2020.
\newblock Distributional Individual Fairness in Clustering.
\newblock \emph{arXiv preprint arXiv:2006.12589} .

\bibitem[{Ashtiani, Kushagra, and Ben-David(2016)}]{ashtiani2016clustering}
Ashtiani, H.; Kushagra, S.; and Ben-David, S. 2016.
\newblock Clustering with same-cluster queries.
\newblock In \emph{Advances in Neural Information Processing Systems},
  3216--3224.

\bibitem[{Bera, Chakrabarty, and Negahbani(2019)}]{bera2019fair}
Bera, S.~K.; Chakrabarty, D.; and Negahbani, M. 2019.
\newblock Fair algorithms for clustering.
\newblock \emph{CoRR arXiv:1901.02393} .

\bibitem[{Binns(2018)}]{binns2018fairness}
Binns, R. 2018.
\newblock Fairness in machine learning: Lessons from political philosophy.
\newblock In \emph{Conference on Fairness, Accountability and Transparency}.

\bibitem[{Brams and Taylor(1996)}]{brams1996fair}
Brams, S.~J.; and Taylor, A.~D. 1996.
\newblock \emph{Fair Division: From cake-cutting to dispute resolution}.
\newblock Cambridge University Press.

\bibitem[{Celis, Huang, and Vishnoi(2018)}]{celis2018multiwinner}
Celis, L.~E.; Huang, L.; and Vishnoi, N.~K. 2018.
\newblock Multiwinner voting with fairness constraints.
\newblock In \emph{Proceedings of the 27th International Joint Conference on
  Artificial Intelligence}.

\bibitem[{Celis, Straszak, and Vishnoi(2018)}]{celis2018ranking}
Celis, L.~E.; Straszak, D.; and Vishnoi, N.~K. 2018.
\newblock Ranking with Fairness Constraints.
\newblock In \emph{Proceedings of the 45th International Colloquium on
  Automata, Languages, and Programming}.

\bibitem[{Chierichetti et~al.(2017)Chierichetti, Kumar, Lattanzi, and
  Vassilvitskii}]{fairlet}
Chierichetti, F.; Kumar, R.; Lattanzi, S.; and Vassilvitskii, S. 2017.
\newblock Fair clustering through fairlets.
\newblock In \emph{Advances in Neural Information Processing Systems}.

\bibitem[{Ding(2020)}]{ding2020faster}
Ding, H. 2020.
\newblock Faster balanced clusterings in high dimension.
\newblock \emph{Theoretical Computer Science} .

\bibitem[{Dwork et~al.(2012)Dwork, Hardt, Pitassi, Reingold, and
  Zemel}]{dwork2012fairness}
Dwork, C.; Hardt, M.; Pitassi, T.; Reingold, O.; and Zemel, R. 2012.
\newblock Fairness through awareness.
\newblock In \emph{Proceedings of the 3rd Innovations in Theoretical Computer
  Science Conference}, 214--226.

\bibitem[{Feldman et~al.(2015)Feldman, Friedler, Moeller, Scheidegger, and
  Venkatasubramanian}]{feldman2015certifying}
Feldman, M.; Friedler, S.~A.; Moeller, J.; Scheidegger, C.; and
  Venkatasubramanian, S. 2015.
\newblock Certifying and removing disparate impact.
\newblock In \emph{Proceedings of the 21th ACM SIGKDD International Conference
  on Knowledge Discovery and Data Mining}.

\bibitem[{Firmani, Saha, and Srivastava(2016)}]{firmani2016online}
Firmani, D.; Saha, B.; and Srivastava, D. 2016.
\newblock Online entity resolution using an oracle.
\newblock \emph{Proceedings of the VLDB Endowment} 9(5): 384--395.

\bibitem[{Galhotra, Brun, and Meliou(2017)}]{fairnesssg}
Galhotra, S.; Brun, Y.; and Meliou, A. 2017.
\newblock Fairness Testing: Testing Software for Discrimination.
\newblock In \emph{Proceedings of the 11th Joint Meeting on Foundations of
  Software Engineering}.

\bibitem[{Galhotra et~al.(2018)Galhotra, Firmani, Saha, and
  Srivastava}]{galhotra2018robust}
Galhotra, S.; Firmani, D.; Saha, B.; and Srivastava, D. 2018.
\newblock Robust entity resolution using random graphs.
\newblock In \emph{Proceedings of the International Conference on Management of
  Data}, 3--18.

\bibitem[{Galhotra, Saisubramanian, and
  Zilberstein(2019)}]{galhotra2019lexicographically}
Galhotra, S.; Saisubramanian, S.; and Zilberstein, S. 2019.
\newblock Lexicographically Ordered Multi-Objective Clustering.
\newblock \emph{CoRR arXiv preprint:1903.00750} .

\bibitem[{Gillen et~al.(2018)Gillen, Jung, Kearns, and Roth}]{gillen2018online}
Gillen, S.; Jung, C.; Kearns, M.; and Roth, A. 2018.
\newblock Online learning with an unknown fairness metric.
\newblock In \emph{Advances in neural information processing systems},
  2600--2609.

\bibitem[{Haraty, Dimishkieh, and Masud(2015)}]{haraty2015enhanced}
Haraty, R.~A.; Dimishkieh, M.; and Masud, M. 2015.
\newblock An Enhanced K-means Clustering Algorithm for Pattern Discovery in
  Healthcare Data.
\newblock \emph{International Journal of Distributed Sensor Networks} .

\bibitem[{Hilgard et~al.(2019)Hilgard, Rosenfeld, Banaji, Cao, and
  Parkes}]{hilgard2019learning}
Hilgard, S.; Rosenfeld, N.; Banaji, M.~R.; Cao, J.; and Parkes, D.~C. 2019.
\newblock Learning representations by humans, for humans.
\newblock \emph{arXiv preprint arXiv:1905.12686} .

\bibitem[{Hospers, Desrochers, and Sautet(2009)}]{hospers2009next}
Hospers, G.-J.; Desrochers, P.; and Sautet, F. 2009.
\newblock The Next Silicon Valley? On the Relationship Between Geographical
  Clustering and Public Policy.
\newblock \emph{International Entrepreneurship and Management Journal} 5(3):
  285--299.

\bibitem[{Ilvento(2019)}]{ilvento2019metric}
Ilvento, C. 2019.
\newblock Metric learning for individual fairness.
\newblock \emph{arXiv preprint arXiv:1906.00250} .

\bibitem[{Kamishima et~al.(2012)Kamishima, Akaho, Asoh, and
  Sakuma}]{kamishima2012fairness}
Kamishima, T.; Akaho, S.; Asoh, H.; and Sakuma, J. 2012.
\newblock Fairness-aware classifier with prejudice remover regularizer.
\newblock In \emph{Joint European Conference on Machine Learning and Knowledge
  Discovery in Databases}.

\bibitem[{Kleindessner, Awasthi, and Morgenstern(2019)}]{kleindessner2019fair}
Kleindessner, M.; Awasthi, P.; and Morgenstern, J. 2019.
\newblock Fair k-center clustering for data summarization.
\newblock \emph{International Conference of Machine Learning} .

\bibitem[{Knight(2019)}]{biaswired}
Knight, W. 2019.
\newblock Researchers Want Guardrails to Help Prevent Bias in {AI}.
\newblock
  \urlprefix\url{https://www.wired.com/story/researchers-guardrails-prevent-bias-ai/}.

\bibitem[{Mahabadi and Vakilian(2020)}]{mahabadifairness20}
Mahabadi, S.; and Vakilian, A. 2020.
\newblock Individual Fairness for k-Clustering.
\newblock In \emph{Proceedings of the 37th International Conference on Machine
  Learning}.

\bibitem[{Mazumdar and Saha(2017{\natexlab{a}})}]{mazumdar2017clustering}
Mazumdar, A.; and Saha, B. 2017{\natexlab{a}}.
\newblock Clustering with noisy queries.
\newblock In \emph{Advances in Neural Information Processing Systems},
  5788--5799.

\bibitem[{Mazumdar and Saha(2017{\natexlab{b}})}]{mazumdar2017query}
Mazumdar, A.; and Saha, B. 2017{\natexlab{b}}.
\newblock Query complexity of clustering with side information.
\newblock In \emph{Advances in Neural Information Processing Systems},
  4682--4693.

\bibitem[{Mehrabi et~al.(2019)Mehrabi, Morstatter, Saxena, Lerman, and
  Galstyan}]{mehrabi2019survey}
Mehrabi, N.; Morstatter, F.; Saxena, N.; Lerman, K.; and Galstyan, A. 2019.
\newblock A survey on bias and fairness in machine learning.
\newblock \emph{arXiv preprint arXiv:1908.09635} .

\bibitem[{Saisubramanian, Galhotra, and
  Zilberstein(2020)}]{saisubramanian2020balancing}
Saisubramanian, S.; Galhotra, S.; and Zilberstein, S. 2020.
\newblock Balancing the Tradeoff Between Clustering Value and Interpretability.
\newblock In \emph{Proceedings of the AAAI/ACM Conference on AI, Ethics, and
  Society}, 351--357.

\bibitem[{Saisubramanian, Kamar, and Zilberstein(2020)}]{SKZijcai2020}
Saisubramanian, S.; Kamar, E.; and Zilberstein, S. 2020.
\newblock A Multi-Objective Approach to Mitigate Negative Side Effects.
\newblock In \emph{Proceedings of the 29th International Joint Conference on
  Artificial Intelligence}.

\bibitem[{Thomson(1983)}]{thomson1983problems}
Thomson, W. 1983.
\newblock Problems of fair division and the egalitarian solution.
\newblock \emph{Journal of Economic Theory} 31(2): 211--226.

\bibitem[{Vazirani(2013)}]{kcenter2approx}
Vazirani, V.~V. 2013.
\newblock \emph{Approximation Algorithms}.
\newblock Springer Science \& Business Media.

\bibitem[{Verma and Rubin(2018)}]{verma2018fairness}
Verma, S.; and Rubin, J. 2018.
\newblock Fairness definitions explained.
\newblock In \emph{IEEE/ACM International Workshop on Software Fairness}.

\bibitem[{Vesdapunt, Bellare, and Dalvi(2014)}]{vesdapunt2014crowdsourcing}
Vesdapunt, N.; Bellare, K.; and Dalvi, N. 2014.
\newblock Crowdsourcing algorithms for entity resolution.
\newblock \emph{Proceedings of the VLDB Endowment} 7(12): 1071--1082.

\bibitem[{White(1982)}]{white1982maximum}
White, H. 1982.
\newblock Maximum likelihood estimation of misspecified models.
\newblock \emph{Econometrica: Journal of the Econometric Society} 1--25.

\end{thebibliography}
\end{document}